\documentclass[conference]{IEEEtran}
\IEEEoverridecommandlockouts

\usepackage{cite}
\usepackage{amsmath,amssymb,amsfonts}
\usepackage{mathtools}
\usepackage{amsthm}
\usepackage{algorithm}
\usepackage{algpseudocode}
\usepackage{graphicx}
\usepackage{booktabs}
\usepackage{multirow}
\usepackage{subcaption}
\usepackage[maxfloats=45]{morefloats}
\usepackage{xcolor}
\usepackage{pifont}
\usepackage[hidelinks]{hyperref}
\usepackage{url}

\newcommand{\method}{\textsc{DiffInt}}
\newcommand{\green}[1]{\textbf{#1}}
\newcommand{\blue}[1]{\underline{#1}}
\newcommand{\lightred}[1]{\textit{#1}}

\newtheorem{theorem}{Theorem}
\newtheorem{proposition}{Proposition}
\newtheorem{corollary}{Corollary}

\newtheorem{remark}{Remark}
\newtheorem{definition}{Definition}

\begin{document}

\title{Differentiable Interval Bottlenecks for\\
Interpretable Anomaly Detection in Numerical Data}

\author{\IEEEauthorblockN{Lamine Diop and Marc Plantevit}
\IEEEauthorblockA{\textit{EPITA, Laboratoire de Recherche de l'EPITA (LRE)}\\Le Kremlin-Bic\^etre, France\\
\{lamine.diop, marc.plantevit\}@epita.fr}}

\maketitle

\begin{abstract}
Reconstruction-based anomaly detectors are accurate but opaque: a deep
autoencoder flags a sample without telling a practitioner \emph{which feature
ranges} made it anomalous. We propose \method{}, an autoencoder whose latent
bottleneck is structured as a set of soft, axis-aligned \emph{interval}
memberships learned end-to-end directly from raw numerical data, without any
discretization or binarization. Each latent unit corresponds to a
human-readable hyper-rectangle in feature space; an instance is encoded by how
strongly it falls inside each interval \emph{relative to the other units}, and
its reconstruction error is the anomaly score. This keeps the power of
differentiable representation learning while exposing an inspectable internal
structure. We make the inductive bias precise: a \emph{certified} reconstruction-error lower bound for points that
fall outside every active coordinate of the learned support (with a
Lipschitz-enforced decoder), and a graded, empirically verified suppression
mechanism for the usual case in which only a few features are abnormal; and we
provide a closed-form, \emph{label-free importance} that ranks each
(unit, feature) pair from quantities the model already maintains, turning
trained intervals into auditable \emph{candidate constraints} without ever
seeing an anomaly label. On 48 ADBench benchmarks against 22 baselines under a
common $[-1,1]$-normalized protocol, \method{} attains the \emph{best mean rank
overall on both metrics} ($4.10$ on ROC--AUC, $4.16$ on AUPR); among
inlier-only detectors it leads its regime clearly, and it is competitive with
the strongest contaminated-data detectors (see the stratified and complete-case
analyses). It is the only interpretable detector in the statistically-tied
leading cluster of seven methods.
\end{abstract}

\begin{IEEEkeywords}
anomaly detection, interpretability, differentiable models, interval
representations, numerical data
\end{IEEEkeywords}

\section{Introduction}
\label{sec:intro}

Anomaly detection on tabular numerical data underpins fraud screening, fault
monitoring, and clinical triage. Reconstruction-based deep detectors
(autoencoders, VAEs, and one-class networks) are among the most accurate
general-purpose methods on heterogeneous benchmarks~\cite{han2022adbench}.
Their weakness is not accuracy but \emph{accountability}: the latent code is an
entangled, rotationally ambiguous vector~\cite{locatello2019challenging,
lipton2018mythos}, so a flagged record comes with a scalar error and no
statement of \emph{which feature values} were responsible. A fraud analyst
who receives ``account $\mathrm{42819}$: error $7.4$'' cannot tell whether to
escalate, refund, or dismiss. Regulation codifies the same need -- the EU AI
Act's ``meaningful information about the logic involved,'' feature-level reason
codes in credit scoring -- so an unexplained alert has little practical use.

A natural way to obtain feature-grounded explanations is to constrain the model
to reason in \emph{intervals}: axis-aligned hyper-rectangles, the numerical
analogue of pattern-mining items and rule-learning conditions, immediately
readable as ``feature $j \in [a,b]$''. Classical interval pattern discovery,
however, relies on exhaustive enumeration over a discretized
space~\cite{DBLP:conf/ijcai/KaytoueKN11,
DBLP:conf/pkdd/BelfodilBK18}, which both loses information and scales poorly.
Differentiable pattern mining~\cite{fischer_differentiable_2021,
walter_finding_2023,DBLP:conf/pkdd/ChataingPPR24} replaces enumeration with
gradient descent but, to date, operates only on \emph{binary} data and thus
still requires the inputs to be discretized first.

We take the differentiable route but keep the data numerical. \method{}\footnote{Code:
\url{https://github.com/DiffInt/diffint}.} is an
autoencoder whose bottleneck is a layer of soft interval memberships: each
latent unit is a learnable hyper-rectangle (a center and a softplus half-width,
with sigmoid boundaries for differentiability). A decoder reconstructs the input
from these memberships; the model is trained on normal data with a
reconstruction loss, and the reconstruction error is the anomaly score. Because
the bottleneck \emph{is} the intervals, the internal state is inspectable, and a
simple statistic turns each into an auditable candidate constraint.

We limit our claims: \method{} is \emph{not} a pattern-set miner -- we claim no
minimality, optimal pattern count, or description-length objective. Interval
structure here is a \emph{mechanism} for an interpretable anomaly detector,
evaluated in that role. Our contributions are:

\begin{itemize}
\item \textbf{An interval-structured differentiable bottleneck} for
reconstruction-based anomaly detection that operates on raw numerical features
with no discretization, using a competitive log-sum-exp aggregator that gives
a scalar per-unit code (Sec.~\ref{sec:method}).
\item \textbf{A method-specific clipping-margin result}: a graded suppression
mechanism (Cor.~\ref{cor:sparse}) whereby a unit's weight decays geometrically
in the number of its active coordinates a point violates -- the operative
property for the realistic anomaly with a few abnormal features -- of which the
full \emph{certified} reconstruction-error lower bound (Thm.~\ref{prop:clip},
needing violation on \emph{every} active coordinate and a Lipschitz-enforced
decoder) is the extreme case, verified numerically in
Sec.~\ref{sec:certify}. This is what justifies the reconstruction error as an
anomaly score (Sec.~\ref{sec:theory}), together with the stability properties
(EMA support, Lipschitz regularity) that the model relies on.
\item \textbf{An analysis of the input-domain choice}: we show that min--max
scaling to $[-1,1]$ is an integral part of the method. It aligns random
initialization with the data support, makes the sigmoid temperature $\tau$
dimensionless, places the empty-code attractor of Theorem~\ref{prop:clip}
at the inlier centroid, and bounds the Lipschitz slack uniformly across datasets
(Sec.~\ref{sec:norm}); one configuration then transfers to all 48 datasets
without retuning.
\item \textbf{A large-scale evaluation}: 48 ADBench datasets, 22
baselines, with the semi-supervised training protocol, the handling of missing
baseline runs, and the exact statistical reading of the critical-difference
diagrams all stated up front (Sec.~\ref{sec:exp}). Because \method{} and three
baselines train on inliers only while the remaining baselines run on
contaminated data, we add a \emph{stratified} and a \emph{complete-case}
re-analysis in App.~\ref{sup:strat} so the comparison is read within, not
across, training regimes. We claim only what the statistics support.
\end{itemize}

\section{Related Work}
\label{sec:related}

\paragraph*{Reconstruction and one-class deep detectors}
Autoencoders~\cite{sakurada2014ae}, VAEs~\cite{kingma2013vae}, and
DAGMM~\cite{zong2018dagmm} score anomalies by reconstruction error;
DeepSVDD~\cite{ruff2018deepsvdd} learns a one-class hypersphere. Recent
variants add an OCSVM-guided boundary or sample retrieval, but expose a
\emph{global} kernel boundary or retrieval cues rather than per-feature ranges.
These models are strong but opaque, and are known to reconstruct anomalies that
lie near the learned
manifold~\cite{zong2018deep,pidhorskyi2020adversarial,bergman2020deep}. Our
contribution is orthogonal to raw accuracy: we keep the reconstruction paradigm
but replace the entangled bottleneck with an interval-structured one, with a
deterministic argument (Sec.~\ref{sec:theory}) for \emph{why} out-of-support
points are not reconstructed.

\paragraph*{Classical, self-supervised and generative detectors}
LOF~\cite{breunig2000lof}, OCSVM~\cite{scholkopf1999ocsvm},
PCA~\cite{shyu2003pca}, KDE~\cite{latecki2007kde}, GOAD~\cite{bergman2020goad},
ICL~\cite{shenkar2022icl}, SLAD~\cite{xu2023slad}, MCM~\cite{yin2024mcm},
GAN-based~\cite{schlegl2017anogan}, diffusion/flow
detectors~\cite{ho2020ddpm,livernoche2023dte} and the graph detector
LUNAR~\cite{goodge2022lunar} span the ADBench suite~\cite{han2022adbench}; the
recent flow-matching contraction model TCCM~\cite{tccm2025} is a strong recent
inductive baseline. They differ widely in interpretability; none exposes
feature-range explanations natively.

\paragraph*{Pattern mining and differentiable rule learning}
Frequent and quality pattern mining~\cite{DBLP:conf/vldb/AgrawalS94,
DBLP:conf/kdd/ZakiPOL97,DBLP:journals/datamine/VreekenLS11,
DBLP:journals/datamine/Bie11} and interval pattern
structures~\cite{DBLP:conf/ijcai/KaytoueKN11,DBLP:conf/pkdd/BelfodilBK18}
enumerate patterns over discretized data. BinaPs~\cite{fischer_differentiable_2021}
and DiffNaps/DiffVersify~\cite{walter_finding_2023,
DBLP:conf/pkdd/ChataingPPR24,chataing2024diffversify} introduced
\emph{differentiable} pattern discovery, but only for binary data.
Neuro-symbolic rule miners such as Aerial+~\cite{karabulut25a} still discretize
numerical attributes; supervised hyper-rectangle ensembles such as
GBM--HRBM~\cite{konstantinov2023interpretable} need labels and are not
end-to-end differentiable; differentiable rule
networks~\cite{wang2021scalable} target
prediction, not unsupervised detection. To our knowledge, \method{} is the first
to use soft numerical interval memberships as a structured bottleneck in a
reconstruction-based anomaly detector trained on raw numerical features without
discretization, and to empirically validate the resulting interval-based
explanations. We claim novelty only for this specific combination, not for
axis-aligned soft regions in general.

\paragraph*{Why a structured bottleneck}
Representation-learning analyses show generic encoders produce \emph{entangled},
non-identifiable factors absent a strong inductive
bias~\cite{locatello2019challenging,bengio2013representation,lipton2018mythos};
an interval bottleneck supplies one -- coordinate-wise bounds that keep
geometric meaning and yield stable regions across runs, and whose hard
boundaries counteract the near-manifold anomaly reconstruction
(Sec.~\ref{sec:theory}) that an unconstrained code permits. Related structured
bottlenecks each fall short on some axis: prototype networks~\cite{chen2019looks}
expose exemplar \emph{points}, not ranges; sparse/disentangled autoencoders
($\beta$-VAE~\cite{higgins2017beta}) align directions but bound no coordinate
(identifiability results~\cite{locatello2019challenging} show this is
fundamental); supervised hyper-rectangle
ensembles~\cite{konstantinov2023interpretable} are not end-to-end
differentiable. The interval bottleneck is unsupervised, differentiable, and
bounded coordinate-wise at once.

\section{\method{}}
\label{sec:method}

\subsection{Problem and notation}
Let $\mathcal{D}=\{x^{(i)}\}_{i=1}^{n}\subset\mathbb{R}^{d}$ be a numerical
dataset, with features min--max scaled to $[-1,1]$ (Sec.~\ref{sec:norm}). An
\emph{interval pattern} $\mathcal{X}_k$ assigns to every feature $j$ a closed
interval $[\underline{c}_{kj},\overline{c}_{kj}]$; instance $x$ is covered by
$\mathcal{X}_k$ iff $\underline{c}_{kj}\le x_j\le\overline{c}_{kj}$ for all $j$,
i.e.\ $\mathcal{X}_k$ is a hyper-rectangle. We train only on normal
(inlier) data and, at test time, score an instance by its reconstruction
error. We use $K$ interval units as a fixed bottleneck width; $K$ is a capacity
hyperparameter, \emph{not} a claimed minimal pattern set
(Sec.~\ref{sec:limits}).

\subsection{Input normalization to $[-1,1]$}
\label{sec:norm}

We prescribe min--max scaling to $[-1,1]^d$ as part of the method, not an
interchangeable preprocessing detail. Every component is calibrated to a
symmetric bounded domain, which is what lets a single configuration generalize
across all 48 datasets.

\noindent\textbf{(i) Initialization is data-aligned.}
Centers $m_{kj}$ and width logits $\delta_{kj}$ are drawn from a narrow
zero-mean Gaussian; after softplus, half-widths concentrate around
$\log 2\!\approx\!0.69$, so a fresh unit covers $\approx[-0.7,0.7]$ per
feature, a non-degenerate fraction of $[-1,1]$. Under un-normalized or $[0,1]$ data
the same init lies outside or off-center, and training must first
\emph{translate} every interval; the symmetric choice removes this transient
and makes the random init of Algorithm~\ref{alg:train} valid across datasets.

\noindent\textbf{(ii) The temperature $\tau$ becomes dimensionless.}
The boundary sigmoid $\sigma((x_j-\underline c_{kj})/\tau)$ has a transition
zone of width $\Theta(\tau)$ \emph{in the units of $x_j$}; on $[-1,1]^d$ every
feature spans $2$, so a single $\tau{=}0.1$ is a $5\%$ zone everywhere. Any
non-uniform scaling forces a per-feature $\tau_j$, and the joint membership
$I_{kj}$ then loses the \emph{single} clipping-margin parameter
$\beta=\sigma(-\eta/\tau)$ of Thm.~\ref{prop:clip}, and the closed-form bound
is lost.

\noindent\textbf{(iii) The empty-code attractor is the inlier centroid.}
Out-of-support points are pulled toward $g_\theta(f_0)$ (Thm.~\ref{prop:clip}).
Trained on inliers with RMSE, $g_\theta(f_0)$ sits near the inlier centroid,
which under $[-1,1]$ scaling is a neighborhood of the origin; anomalies lie far
from it, so the attractor separates them by the full \emph{inlier amplitude},
the largest margin available. $[0,1]$ shifts the centroid off the attractor,
and StandardScaler makes magnitudes dataset-dependent and the bound
incomparable.

\noindent\textbf{(iv) The Lipschitz slack is dataset-independent.}
The slack $\tfrac{L}{d}\kappa(\beta)$ of Thm.~\ref{prop:clip} transfers across
datasets only if the domain has a fixed diameter; $[-1,1]^d$ supplies $2\sqrt
d$, whereas StandardScaler's $\sup\|x\|$ scales with the tail. The bounded
domain likewise keeps the encoder constant of Prop.~\ref{prop:lip} the same
quantity across datasets.

\noindent\textbf{(v) Symmetry balances the two boundaries.}
The two boundary gradients have magnitude $\tfrac{1}{4\tau}I_{kj}$ each; on
$[0,1]$ the inlier mass loads them asymmetrically and intervals tend to
collapse to half-rectangles ``$x_j\le b$''. Symmetric scaling equalizes them in
expectation, yielding the two-sided rules ``$x_j\in[a,b]$'' an analyst reads off
(Sec.~\ref{sec:lfi}).

\noindent\textbf{Empirical confirmation.}
What matters is \emph{whether} inputs are mapped into a bounded symmetric domain:
removing normalization costs $\approx3$ AUROC points (Table~\ref{tab:ablate}a,
$0.805$ vs.\ $[-1,1]$ $0.833$). Among bounded/standardized choices $[-1,1]$ is
also the \emph{most accurate} ($0.833$ vs.\ StandardScaler $0.818$, $[0,1]$
$0.824$; $\le\!1.5$-point spread), and it is the only one under which reasons
(i)--(v) hold \emph{simultaneously}, so one configuration transfers across all
$48$ datasets without retuning. At inference, test features are train-scaled and
\emph{clipped} to $[-1,1]$; clamping can only \emph{lower} memberships, so it is
unlikely to mask anomalies (none observed).

\subsection{Differentiable interval bottleneck}
To make boundaries learnable we use a center/half-width parameterization. For
unit $k$ and feature $j$, an unconstrained center $m_{kj}$ and width logit
$\delta_{kj}$ give a strictly positive half-width via softplus,
\(
\Delta_{kj}=\operatorname{softplus}(\delta_{kj})=\ln(1+e^{\delta_{kj}})>0,
\)
so the interval is
$[\,\underline{c}_{kj},\overline{c}_{kj}\,]=[\,m_{kj}-\Delta_{kj},\,
m_{kj}+\Delta_{kj}\,]$. The softplus keeps each half-width strictly positive and
decoupled across coordinates, with bounded derivative
$\sigma(\delta_{kj})\!\in\!(0,1)$, so widths and gradients remain bounded
throughout training. Hard membership is non-differentiable; we replace each
boundary with a sigmoid of temperature $\tau>0$:
\[
s^{\mathrm{low}}_{kj}(x_j)=\sigma\!\Big(\tfrac{x_j-\underline{c}_{kj}}{\tau}\Big),
\quad
s^{\mathrm{up}}_{kj}(x_j)=\sigma\!\Big(\tfrac{\overline{c}_{kj}-x_j}{\tau}\Big),
\]
and define the soft per-feature membership
\(
I_{kj}(x_j)=s^{\mathrm{low}}_{kj}(x_j)\,s^{\mathrm{up}}_{kj}(x_j)\in[0,1],
\)
which is $\approx 1$ well inside the interval and decays smoothly outside.
Centers and width logits are initialized from a narrow zero-mean Gaussian, and
all interval parameters are learned jointly with the decoder.

\subsection{Membership aggregation}
The $d$ per-feature memberships of a unit must be combined into a pattern
activation. We use a competitive log-sum-exp aggregator,
\[
f_{k}(x)=
\frac{\exp\!\big(\sum_{j}\log I_{kj}(x_j)\big)}
{\sum_{r=1}^{K}\exp\!\big(\sum_{j}\log I_{rj}(x_j)\big)}\in[0,1].
\]
This is a softmax over log-memberships: units compete to explain a point, and
as $\tau\!\to\!0$ a single covering unit wins. It yields a scalar per unit, so
the encoder output is unambiguously $f(x)\in\mathbb{R}^{K}$.
The code is a \emph{relative} competition, not an absolute membership:
$\sum_k f_k(x)=1$ always, a point outside every box does not get a small code
but collapses to the fixed empty code $f_0$ (Thm.~\ref{prop:clip}(i)), and
per-instance ``how far inside box $k$'' should be read from the soft box
membership $m_k(x)=\prod_j I_{kj}(x_j)$, not $f_k(x)$. Numerically we clamp
$I_{kj}\ge10^{-8}$ before the log and normalize with a max-subtracting softmax,
so the aggregator and its gradients stay finite arbitrarily far out of support.

\subsection{Decoder, training, and EMA support}
\label{sec:decoder}
The decoder is a two-layer ReLU MLP $g_\theta:\mathbb{R}^{K}\!\to\!\mathbb{R}^{d}$
with hidden width $h{=}128$, a single LayerNorm conditioning the hidden
activations (default), and a linear read-out; it maps the bottleneck back to
input space, $\hat{x}=g_\theta(f(x))$. Training minimizes the per-sample RMSE
$\mathcal{L}_{\mathrm{RMSE}}(x)=\sqrt{\tfrac{1}{d}\|x-\hat{x}\|_2^2}$ on inliers
only (label $0$). We expose two decoders: the \emph{default} (LayerNorm,
finite but unconstrained Lipschitz constant) and the \emph{Certified-Lipschitz
\method{}}, which spectrally normalizes every linear map and drops every
non-$1$-Lipschitz layer (including LayerNorm), so $L=\prod_l\|W_l\|_2$ is an
explicit Lipschitz upper bound; this regularity is what turns
Thm.~\ref{prop:clip} into a deterministic certificate (Sec.~\ref{sec:certify}).
To obtain a stable, dataset-level activation statistic for interval scoring and
interpretability, \method{} keeps an exponential moving average of the soft
support,
\[
\mathrm{support}^{(t+1)}_{kj}\leftarrow
\rho\,\mathrm{support}^{(t)}_{kj}
+(1-\rho)\tfrac{1}{|\mathcal{B}|}\sum_{i\in\mathcal{B}}I_{kj}(x^{(i)}_j),
\quad\rho\in[0,1).
\]
The EMA is purely a diagnostic statistic; it is not part of the forward pass.
At test time the anomaly score of $x$ is the mean absolute reconstruction error
$\mathrm{MAE}(x)=\tfrac1d\|x-\hat{x}\|_1$ (averaged over the $d$ scaled
coordinates, so $\mathrm{MAE}(x)\!\in\![0,2]$ on $[-1,1]^d$). RMSE is used as
the training objective and MAE as the test-time score, a standard pairing
that keeps the optimization robust while the score itself stays easy to read.

\begin{algorithm}[t]
\caption{\method{} training (one epoch)}
\label{alg:train}
\begin{algorithmic}[1]
\Require inliers $\mathcal{D}$, $K$, $\tau$, EMA decay $\rho$
\ForAll{mini-batches $\mathcal{B}\subset\mathcal{D}$}
       \For{$k=1,\dots,K$ \textbf{and} $j=1,\dots,d$}
              \State $\Delta_{kj}\gets\operatorname{softplus}(\delta_{kj})$
              \State compute $s^{\mathrm{low}}_{kj}, s^{\mathrm{up}}_{kj}$ and
                                    $I_{kj}(x_j)\gets s^{\mathrm{low}}_{kj}s^{\mathrm{up}}_{kj}$
       \EndFor
       \State $f_k(x)\gets\textsc{Aggregate}(\{I_{kj}\}_{j=1}^d)$
       \State $\hat{x}\gets g_\theta(f(x))$
       \State backprop $\mathcal{L}_{\mathrm{RMSE}}$; optimizer step
       \State update $\mathrm{support}_{kj}$ by EMA
\EndFor
\Ensure interval params $\{m_{kj},\delta_{kj}\}$, decoder $\theta$
\end{algorithmic}
\end{algorithm}

\begin{figure*}[t]
\centering
\includegraphics[width=0.98\textwidth]{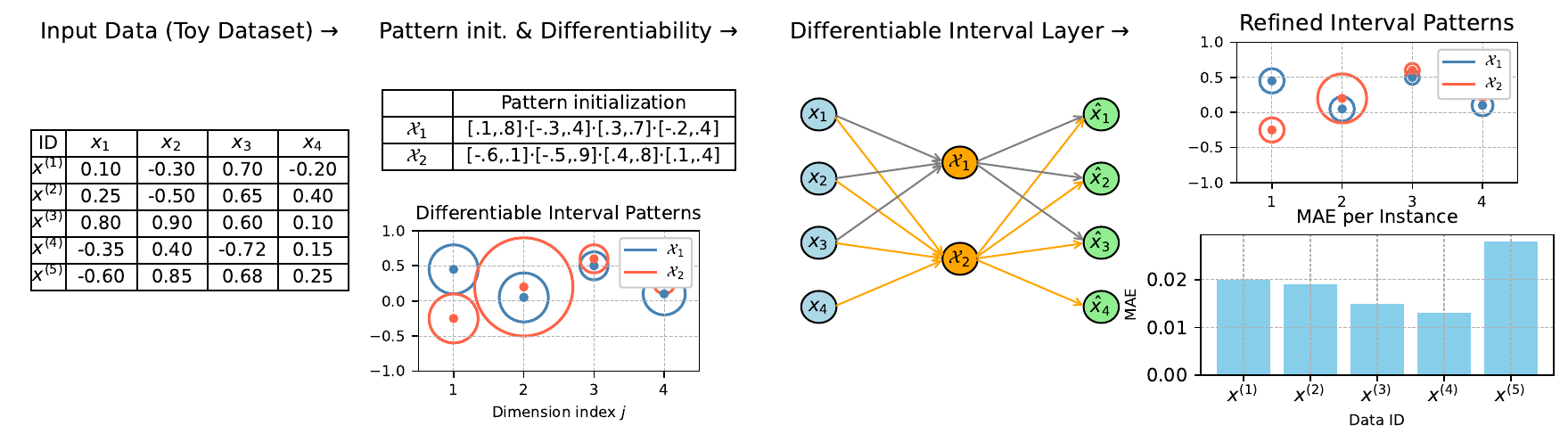}
\caption{\textbf{\method{} end-to-end pipeline on a 4-feature toy
dataset} (left $\to$ right).
\textbf{(a) Input:} $n{=}5$ samples with $d{=}4$ features min--max scaled
to $[-1,1]$ (Sec.~\ref{sec:norm}); $x^{(1)}{-}x^{(4)}$ are inliers,
$x^{(5)}$ is the anomaly.
\textbf{(b) Initialization:} $K{=}2$ interval units
$\mathcal{X}_1,\mathcal{X}_2$ as centers (dots) with softplus half-widths
(rings); sigmoid-soft boundaries of temperature $\tau$ make all gradients
flow through.
\textbf{(c) Differentiable interval layer:} the bottleneck is the bipartite
map $\{x_j\}\!\to\!\{\mathcal{X}_k\}\!\to\!\{\hat x_j\}$ with per-feature
memberships $I_{kj}(x_j)\!=\!\sigma((x_j{-}\underline c_{kj})/\tau)
\sigma((\overline c_{kj}{-}x_j)/\tau)$, aggregated per unit
and decoded by a small MLP $g_\theta$; RMSE between $x,\hat x$ is
backpropagated \emph{jointly} into $(m_{kj},\delta_{kj})$ and $\theta$.
\textbf{(d) Refined intervals and anomaly score:} after training,
intervals shrink onto the inlier cloud (top), and per-instance MAE
(bottom) is small for inliers but visibly large for $x^{(5)}$, the
empirical signature of the clipping-margin bound
(Thm.~\ref{prop:clip}), which pulls out-of-support points toward the
empty-code reconstruction $g_\theta(f_0)$. Each refined unit reads as an
auditable candidate constraint ``$\mathcal{X}_k\!:\,x_j\!\in\![\underline c_{kj},\overline
c_{kj}]$'' (Sec.~\ref{sec:lfi}).}
\label{fig:framework}
\end{figure*}

\section{What the Interval Structure Provides}
\label{sec:theory}

We state results that are specific to this model and that the detector actually
uses. The core is a clipping-margin result on the reconstruction error of
out-of-support points (\emph{certified} when the decoder is Lipschitz-enforced,
an empirical mechanism otherwise): it makes no independence-across-coordinates
assumption and is consistent with the well-known fact that autoencoders can
otherwise reconstruct high-dimensional anomalies. All proofs are in
Appendix~\ref{app:proofs}.

\begin{definition}[Active coordinate]
\label{def:active}
Coordinate $j$ is \emph{active} for unit $k$ if its learned interval
constrains inliers there, i.e.\ the mean inlier membership
$\bar I_{kj}=\mathbb{E}_{\mathcal{D}}[I_{kj}(x_j)]$ is bounded away from $1$
(e.g.\ when the half-width satisfies $\Delta_{kj}<\tfrac12\,\mathrm{range}_j$).
Inactive coordinates span (almost) the whole feature range and contribute
$\log I_{kj}\approx 0$ to the aggregation.
\end{definition}

\begin{theorem}[Certified clipping margin]
\label{prop:clip}
Fix a test point $x$ that in each active coordinate $j$ (Def.~\ref{def:active})
lies beyond the units' \emph{global envelope}, i.e.\
$x_j>\max_k\overline c_{kj}+\eta$ or $x_j<\min_k\underline c_{kj}-\eta$ for some
$\eta>0$; then $x$ is outside every box \emph{on the same side} per coordinate, so
$I_{kj}(x_j)\le\beta=\sigma(-\eta/\tau)$ for all $k$ and all active $j$. (A point
in an interior gap \emph{between} disjoint sub-intervals violates boxes on
opposite sides; such points fall under the graded suppression of
Cor.~\ref{cor:sparse}, not the full collapse below.)
\textbf{(i) Code collapse (encoder only).} The bottleneck code collapses to an
$x$-independent ``empty'' code $f_0$, $\|f(x)-f_0\|_2\le\kappa(\beta)$ with
$\kappa(\beta)=O(\beta)\to0$; this holds for any decoder.
\textbf{(ii) Certified bound.} If, in addition, the decoder is the
\emph{Certified-Lipschitz} \method{} of Sec.~\ref{sec:decoder} (spectrally
normalized linear maps and no non-$1$-Lipschitz layers), so that
$L=\prod_l\|W_l\|_2$ is an explicit Lipschitz upper bound, then
\[
\mathrm{MAE}(x)\;\ge\;\tfrac1d\big\|x-g_\theta(f_0)\big\|_1
\;-\;\tfrac{L}{\sqrt{d}}\,\kappa(\beta),
\]
a \emph{deterministic certificate}: any point far outside the learned support
incurs an error bounded below by its distance to the single point
$g_\theta(f_0)$ the model collapses to, minus a slack controllable by $\tau$.
\end{theorem}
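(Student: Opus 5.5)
The proof has three parts: a per-coordinate membership bound, a perturbation argument for the softmax, and a triangle inequality through the decoder.

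\emph{Membership bound.} Take an active $j$ with $x_j>\max_k\overline c_{kj}+\eta$. For every unit $k$ we have $(\overline c_{kj}-x_j)/\tau<-\eta/\tau$. Since $\sigma$ is increasing and the other factor satisfies $s^{\mathrm{low}}_{kj}\le 1$, this gives $I_{kj}(x_j)\le s^{\mathrm{up}}_{kj}(x_j)\le\sigma(-\eta/\tau)=\beta$. The lower side is symmetric.

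\emph{Code collapse, part (i).} Let $A_k$ be the set of active coordinates of unit $k$. Write the log-activation as
\[
\ell_k(x)=\sum_{j\in A_k}\log I_{kj}(x_j)+\sum_{j\notin A_k}\log I_{kj}(x_j).
\]
By Def.~\ref{def:active}, the second sum is $\approx 0$; I will formalize this as lying in $[-\epsilon,0]$ uniformly. The first sum is at most $|A_k|\log\beta$.

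The natural definition of the empty code is the softmax of the "reference" logits $\ell_k^0=|A_k|\log\beta$ (or, in the limit, the uniform distribution over the units with the fewest active coordinates). The deviation from $f_0$ then comes from two sources:
\begin{itemize}
\item the inactive-coordinate terms, bounded by $\epsilon$;
\item the active terms, which lie in $[\,|A_k|\log\beta_{\min}(x),\,|A_k|\log\beta\,]$, where $\beta_{\min}(x)$ depends on how far out $x$ is.
\end{itemize}

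\emph{Main obstacle.} The logits do not concentrate at a single value, because the memberships $I_{kj}$ for active $j$ can be far below $\beta$. So pointwise Lipschitz continuity of softmax in the logits does not directly give an $O(\beta)$ bound.

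My first attempt would use the ratio structure: $f_k=m_k/\sum_r m_r$ with $m_k=\prod_j I_{kj}$. For points far out in coordinate $j$, $\log I_{kj}(x_j)\approx -(x_j-\overline c_{kj})/\tau$. The $x$-dependence therefore enters each unit through the common term $-|A_k|x_j/\tau$. When all units share the same active set, this term cancels in the softmax, leaving $x$-independent offsets $\sum_j \overline c_{kj}/\tau$. These offsets define $f_0$, and the error of the linear asymptote of $\log\sigma$ is $O(e^{-\eta/\tau})=O(\beta)$.

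I would make the shared-active-set assumption explicit; it is implicitly required for a single $f_0$. Then the bound on $\|f(x)-f_0\|_2$ follows from the $1$-Lipschitz property of softmax in $\ell_\infty\to\ell_2$ (up to a constant), giving $\kappa(\beta)=O(\beta)$. This step holds for any decoder, since it concerns only the encoder.

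\emph{Certified bound, part (ii).} Spectral normalization and $1$-Lipschitz activations give $\|g_\theta(u)-g_\theta(v)\|_2\le L\|u-v\|_2$. Then
\[
\|x-g_\theta(f(x))\|_1\ \ge\ \|x-g_\theta(f_0)\|_1-\|g_\theta(f(x))-g_\theta(f_0)\|_1 .
\]
Using $\|\cdot\|_1\le\sqrt d\,\|\cdot\|_2$, the last term is at most $\sqrt d\,L\,\kappa(\beta)$. Dividing by $d$ yields exactly the stated $\tfrac{L}{\sqrt d}\kappa(\beta)$ slack.
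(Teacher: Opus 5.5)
Your argument is essentially the paper's proof: the same-side membership bound, the linear asymptote $\log I_{kj}(x_j)=(\overline c_{kj}-x_j)/\tau+O(\beta)$ on saturated coordinates, and softmax shift-invariance removing the unit-independent term $-\tfrac1\tau\sum_{j\in V}x_j$, which leaves $f_0=\mathrm{softmax}(b)$ with $b_k=\tfrac1\tau\sum_{j\in V}\overline c_{kj}$; part (ii) is the same Lipschitz, reverse-triangle and $\|\cdot\|_1\le\sqrt d\,\|\cdot\|_2$ step. Two minor remarks: first, the shared-active-set assumption is not needed if, as the paper does, you let $V$ be every coordinate active for \emph{some} unit, because the global-envelope hypothesis already saturates \emph{every} unit on each $j\in V$; second, you drop your inactive-coordinate $\epsilon$ at the end (the paper does the same with its ``$\log I_{kj}\approx0$''), so strictly what you prove is $\kappa=O(|V|\beta+\epsilon)$ rather than $O(\beta)$.
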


\begin{corollary}[Graded suppression under partial violation]
\label{cor:sparse}
The collapse in Thm.~\ref{prop:clip} is the all-coordinates extreme of a
graded effect and does \emph{not} require a point to be out-of-support
everywhere. If $x$ violates (in a consistent direction) $s_k\ge1$ active
coordinates of unit $k$, then $\log I_{kj}\le\log\beta$ on each, so the
unnormalized weight obeys $\exp(\ell_k)\le\beta^{s_k}$: a unit is suppressed
geometrically in the number of its violated active coordinates, and the
bottleneck mass shifts toward the units a point violates \emph{least}. The full
collapse to a single $x$-independent $f_0$ is recovered only when $s_k=|A_k|$
for every unit (all active coordinates violated). We therefore do \emph{not}
claim a fixed reconstruction under partial violation, only that the matching
units are exponentially down-weighted, the counterpart of the
common anomaly profile in which a few features are abnormal.
\end{corollary}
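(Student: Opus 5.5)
The plan has three steps: a per-coordinate bound on the sigmoid memberships, a bound on the unnormalized weight $\exp(\ell_k)$ with $\ell_k=\sum_j\log I_{kj}(x_j)$, and then the effect of the softmax normalization.

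\emph{Per-coordinate bound.} Suppose $x$ violates active coordinate $j$ of unit $k$ consistently, say from above, so $x_j\ge\overline c_{kj}+\eta$. Then $s^{\mathrm{up}}_{kj}(x_j)=\sigma((\overline c_{kj}-x_j)/\tau)\le\sigma(-\eta/\tau)=\beta$, because $\sigma$ is increasing. Since $s^{\mathrm{low}}_{kj}\le1$, this gives $I_{kj}(x_j)\le\beta$. The lower-side case is symmetric, with $s^{\mathrm{low}}_{kj}$ playing the role of $s^{\mathrm{up}}_{kj}$. Here ``consistent direction'' only means that each violated coordinate lies beyond one boundary by at least $\eta$; this is exactly the hypothesis of Thm.~\ref{prop:clip}, but localized to unit $k$ and to a subset $S_k\subseteq A_k$ with $|S_k|=s_k$.

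\emph{Weight bound.} Every term satisfies $\log I_{kj}\le0$ because $I_{kj}\in[0,1]$. Dropping the non-violated coordinates therefore only increases the sum:
\[
\ell_k\le\sum_{j\in S_k}\log I_{kj}(x_j)\le s_k\log\beta ,
\]
so $\exp(\ell_k)\le\beta^{s_k}$. The clamp $I_{kj}\ge10^{-8}$ does not break this whenever $\beta\ge10^{-8}$, and it can only help otherwise. I would state this caveat in one line.

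\emph{Effect of normalization.} For two units $k,r$,
\[
\frac{f_k(x)}{f_r(x)}=\exp(\ell_k-\ell_r).
\]
If the matching unit $r$ has no violated active coordinates, and its inactive and non-violated coordinates contribute $\ell_r\ge-c$ for a modest constant $c$ (by Def.~\ref{def:active}, inactive coordinates contribute $\log I\approx0$), then $f_k(x)\le e^{c}\beta^{s_k}$. This makes the mass shift toward the least-violated units precise. It also explains why no fixed reconstruction is claimed: the surviving units still depend on $x$ through their memberships, so the code is not $x$-independent.

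\emph{Extreme case.} When $s_k=|A_k|$ for every $k$, every active log-membership is at most $\log\beta$, and the remaining terms come from inactive coordinates that are all $\approx0$. This is the setting of Thm.~\ref{prop:clip}(i), which I take as given, so the collapse to $f_0$ is recovered there.

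\emph{Main obstacle.} The arithmetic is trivial. The real difficulty is that the lower bound $\ell_r\ge-c$ for the competing unit requires quantifying ``inactive'' and ``non-violated'' memberships. I would keep that part qualitative, exactly as the corollary phrases it, and claim only the unconditional bound $\exp(\ell_k)\le\beta^{s_k}$ together with the ratio identity.
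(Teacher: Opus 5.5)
Your proposal is correct and matches the paper's argument: the paper's proof (the ``Partial violations'' paragraph inside the proof of Thm.~\ref{prop:clip}) is exactly your weight bound, dropping the nonpositive terms $\log I_{kj}$ outside the violated set and bounding each violated one by $\log\beta$, while your sigmoid-monotonicity step and your ratio identity $f_k(x)/f_r(x)=\exp(\ell_k-\ell_r)$ spell out what the paper only asserts or leaves qualitative. One small slip is in your clamp remark: if $\beta<10^{-8}$ the clamp raises memberships above $\beta$, so for the implemented code the bound weakens to $\max(\beta,10^{-8})^{s_k}$ rather than the clamp ``only helping'' (this is immaterial at the paper's $\beta\approx0.12$, and the corollary concerns the unclamped $I_{kj}$ anyway).
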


Part (i) holds for \emph{any} decoder and is the mechanism the detector relies on;
the certificate (ii) is an optional, on-demand guarantee. The realistic
anomaly, a few abnormal features, is the partial-violation regime of
Cor.~\ref{cor:sparse}: the units a point matches are down-weighted geometrically
in the coordinates it violates, so the decoder cannot route the point's abnormal
values through them and the reconstruction is pulled toward the inlier image.
The all-coordinate collapse of Thm.~\ref{prop:clip} -- a single fixed
reconstruction with a certified error bound -- is the extreme of this effect,
used as a verifiable sanity check (Sec.~\ref{sec:certify}). In both regimes the
error grows with the point's distance from the inlier image, which is why MAE
works as a score here, specific to the interval bottleneck, not a generic
autoencoder property.
Fig.~\ref{fig:toy} makes this concrete on a 2D toy set: inliers inside the
learned box are reconstructed faithfully, while out-of-support points are
clipped onto the inlier image and thus separate cleanly in the error histogram.

\begin{remark}[Theory vs.\ the reported experiments]
\label{rem:spectral}
We are explicit about which part of Thm.~\ref{prop:clip} the results use.
Part~(i) (code collapse) is an encoder property and holds for \emph{every} run.
Part~(ii) (the certified error lower bound) additionally needs the spectrally
normalized \emph{Certified-Lipschitz} decoder; \textbf{all headline accuracy
numbers use the default unconstrained ReLU decoder}, for which part~(ii) is a
\emph{structural mechanism} and the collapse is an \emph{empirical} phenomenon
(Figs.~\ref{fig:framework}d,~\ref{fig:toy}), not a guarantee. The certified
variant makes it a verified inequality ($0$ violations on every dataset,
Sec.~\ref{sec:certify}) at a mean-AUROC cost of $0.908\!\to\!0.851$ on the six
datasets tested. The partial-violation regime a real anomaly usually occupies
is the graded suppression of Cor.~\ref{cor:sparse}, which carries no
reconstruction-error bound -- only the exponential down-weighting of the units
the point matches.
\end{remark}

\noindent\textbf{Checklist for Thm.~\ref{prop:clip}.} The hypotheses are
operational: $j$ is \emph{active} for $k$ iff its mean inlier membership
$\bar I_{kj}<0.9$ (it constrains inliers there), and $x$ is \emph{out-of-support}
for unit $k$ when its box membership $m_k(x)=\prod_j I_{kj}(x_j)$ falls below
$\beta=\sigma(-\eta/\tau)$ ($\eta{=}0.2$, $\beta{\approx}0.12$). Under this check
$\ge99\%$ of test anomalies are out-of-support of \emph{every} unit. The
\emph{strict} hypothesis of Thm.~\ref{prop:clip}(ii) -- violation of \emph{every}
active coordinate of some unit -- is stronger: it is met by a dataset-dependent
fraction (routine when units keep few active coordinates, rare when they keep
many; App.~\ref{sec:robust}, Table~\ref{tab:thresh-sens}), and the remaining
anomalies fall under the graded suppression of Cor.~\ref{cor:sparse}, which
down-weights the matched units but carries no reconstruction-error bound.
Empirically, the MAE inequality of Thm.~\ref{prop:clip}(ii) is nonetheless
satisfied for \emph{every} out-of-support point under the Certified-Lipschitz
decoder ($0$ violations on all six datasets; Sec.~\ref{sec:certify},
Table~\ref{tab:certify}, Fig.~\ref{fig:bridge}).

\begin{figure}[t]
\centering
\includegraphics[width=.74\linewidth]{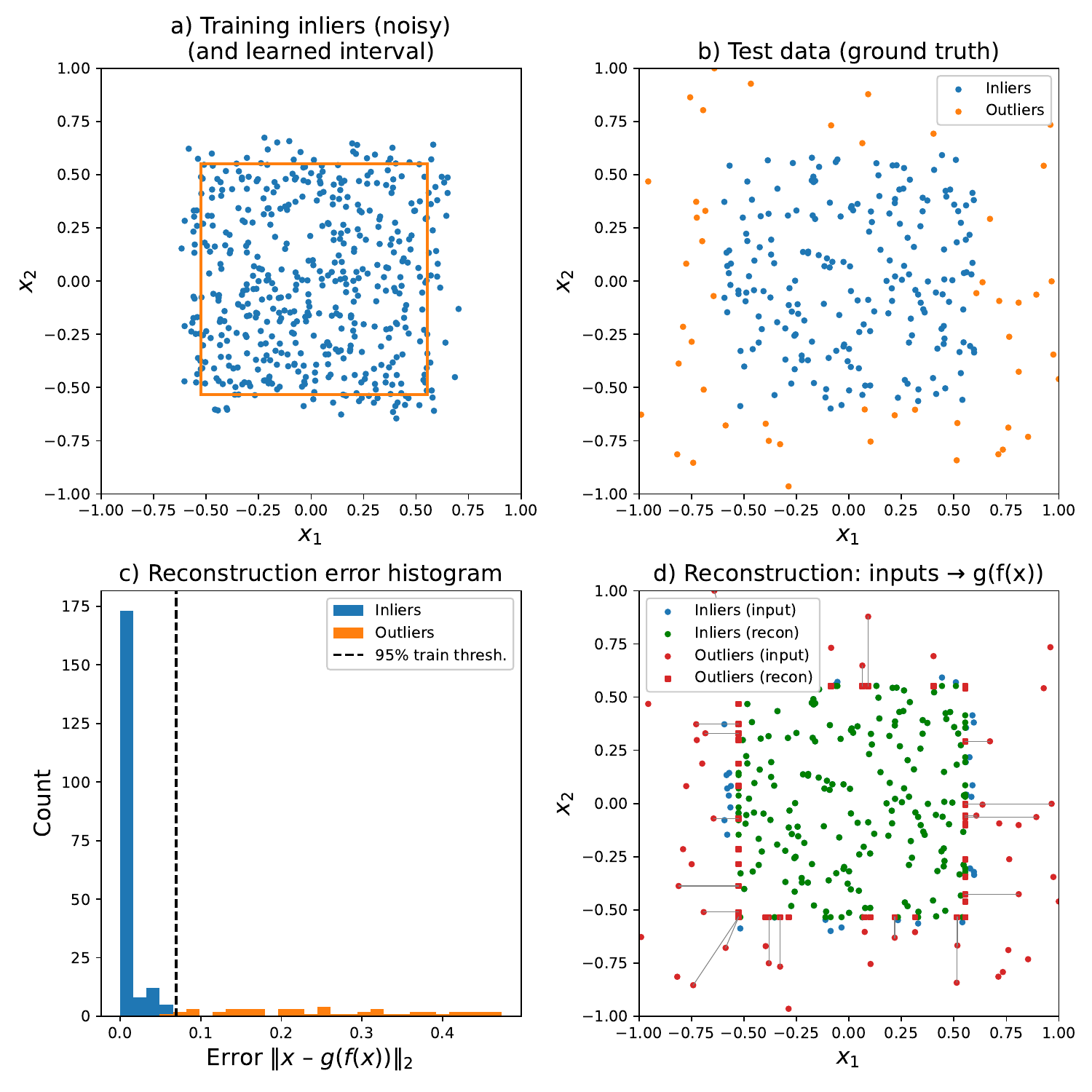}
\caption{Toy illustration of Theorem~\ref{prop:clip}. (a) noisy training
inliers and the learned interval; (b) test data; (c) reconstruction-error
histogram (inliers vs.\ outliers, dashed: 95\% train threshold);
(d) inputs $\to$ reconstructions: out-of-support points are clipped, producing
large error.}
\label{fig:toy}
\end{figure}

\noindent\textbf{Two further properties the method uses} are stated and proved
in App.~\ref{app:proofs}: the EMA support is asymptotically unbiased with finite
effective memory (Prop.~\ref{prop:ema}), and the \method{} encoder is globally
Lipschitz on $[-1,1]^d$ (Prop.~\ref{prop:lip}); with a spectrally normalized
decoder the whole model is Lipschitz, the regularity used by
Theorem~\ref{prop:clip}.

\begin{proposition}[Capacity: soft box-unions cover any support]
\label{prop:capacity}
Let the inlier support $D\subset[-1,1]^d$ be compact, write the soft box
membership of unit $k$ as $m_k(x)=\prod_j I_{kj}(x_j)\in[0,1]$ and the soft-union
$F=1-\prod_{k=1}^K(1-m_k)$. For every $\varepsilon>0$ there exist a finite $K$,
intervals $\{[\underline c_{kj},\overline c_{kj}]\}$ and a temperature $\tau>0$
with $\|F-\mathbf 1_D\|_{L^1([-1,1]^d)}\le\varepsilon$; moreover, for any
$\delta>0$, $F\to\mathbf 1_D$ \emph{uniformly} on
$\{x:\operatorname{dist}(x,\partial D)\ge\delta\}$ as $\tau\to0$.
\end{proposition}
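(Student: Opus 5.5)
We prove the $L^1$ statement first and then the uniform statement, in both cases by approximating $D$ from a grid of small closed cubes. Fix a mesh $h>0$ and partition $[-1,1]^d$ into closed axis-aligned cubes of side $h$. Let $\mathcal{Q}_h$ be the cubes that meet $D$, and let each such cube be one unit $k$ with interval $[\underline c_{kj},\overline c_{kj}]$ equal to the cube's $j$-th side. This gives $K=|\mathcal{Q}_h|<\infty$, and the union $U_h=\bigcup_{Q\in\mathcal{Q}_h}Q$ contains $D$.

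Compactness gives $U_h\subseteq D^{\sqrt d h}$, the $\sqrt d h$-neighbourhood of $D$. Since $\bigcap_h D^{\sqrt d h}=D$ for closed $D$, dominated convergence yields $\lambda(U_h\setminus D)\to0$ as $h\to0$. We therefore choose $h$ with $\lambda(U_h\setminus D)\le\varepsilon/2$, so that $\|\mathbf 1_{U_h}-\mathbf 1_D\|_{L^1}\le\varepsilon/2$.

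With $h$, and hence $K$, fixed, we send $\tau\to0$. For each boundary sigmoid, $\sigma((x_j-\underline c_{kj})/\tau)\to\mathbf 1[x_j>\underline c_{kj}]$ pointwise off the hyperplane $x_j=\underline c_{kj}$. It follows that $m_k\to\mathbf 1_{Q_k}$ pointwise off $\partial Q_k$, and $F\to1-\prod_k(1-\mathbf 1_{Q_k})=\mathbf 1_{U_h}$ off the finite union of cube faces, which is a null set. All functions lie in $[0,1]$, so dominated convergence gives $\|F-\mathbf 1_{U_h}\|_{L^1}\le\varepsilon/2$ for small $\tau$, and the triangle inequality finishes the $L^1$ claim.

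For the uniform statement, fix $\delta>0$ and restrict to $S_\delta=\{x:\operatorname{dist}(x,\partial D)\ge\delta\}$, which splits into $S^{\mathrm{in}}=S_\delta\cap D$ and $S^{\mathrm{out}}=S_\delta\setminus D$. We choose the mesh $h$ with $\sqrt d\,h<\delta/2$ and use the quantitative sigmoid bound $\sigma(t/\tau)\ge1-e^{-t/\tau}$ for $t>0$.

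On $S^{\mathrm{in}}$, take $x\in D$ with $\operatorname{dist}(x,\partial D)\ge\delta$. The cube containing $x$ lies inside $D$, and we want a unit whose box contains $x$ with margin $\gtrsim h$ in every coordinate. To get this, we enlarge each unit's box to the $3\times\cdots\times3$ block of cubes centred at its cube, keeping only blocks that stay inside $D^{2\sqrt d h}$. Then $x$ sits at distance $\ge h$ inside some box, so $m_k(x)\ge(1-e^{-h/\tau})^{2d}$ and $F(x)\ge m_k(x)\to1$ uniformly.

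On $S^{\mathrm{out}}$, every box lies within $D^{2\sqrt d h}\subseteq D^{\delta}$, while points of $S^{\mathrm{out}}$ are at distance $\ge\delta$ from $D$. Hence such a point is separated from each box by at least $\delta-2\sqrt d h>0$ in some coordinate, giving $m_k(x)\le e^{-(\delta-2\sqrt d h)/\tau}$. A union bound then gives $F(x)\le\sum_k m_k(x)\le Ke^{-c/\tau}\to0$ uniformly, since $K$ is fixed. The enlarged boxes do not harm the $L^1$ argument, because the same covering estimate applies with $3h$ in place of $h$.

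The main obstacle is coordinating the choice of $h$ and $\tau$. The $L^1$ part needs $h$ small in terms of $\varepsilon$ before $\tau\to0$. The uniform part, for a given $\delta$, needs boxes that contain interior points with a margin yet stay a positive distance from exterior points. This works only because the separation is measured in $\operatorname{dist}(\cdot,\partial D)$, and it forces $h$ to depend on $\delta$. So for the "moreover" clause we fix $h=h(\delta)$ first, and the uniform convergence is as $\tau\to0$ with that unit family held fixed.
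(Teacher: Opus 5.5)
Your proof is correct. On the $L^1$ claim it follows the paper's argument: approximate $D$ by a finite union of boxes to within $\varepsilon/2$, let $\tau\to0$ so that $F\to\mathbf 1_{C}$ off the box faces, and conclude by dominated convergence and the triangle inequality. Your justification of the approximation step is the sounder of the two. The paper appeals to ``compact, hence Jordan-measurable,'' which is false in general: a fat Cantor set is compact with boundary of positive measure. Your sandwich $D\subseteq U_h\subseteq D^{\sqrt d\,h}$, combined with continuity of measure from above, works for every compact $D$.

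The two routes genuinely diverge on the uniform clause. The paper only asserts that refining the dyadic grid pushes the box faces into the shell. A partition into grid cubes does not do this: it leaves shared internal faces deep inside $D$. On such a face each of the two adjacent cubes has $m_k\to\tfrac12$, so $F\to\tfrac34$ there, and $F$ tends to even smaller values at edges and corners. Since $1-F\ge(1-\max_k m_k)^K$ with $K$ fixed, uniform convergence to $1$ requires every point of $\{x\in D:\operatorname{dist}(x,\partial D)\ge\delta\}$ to lie inside a single box with a margin bounded below. That forces overlapping boxes, which is exactly what your $3\times\cdots\times3$ blocks provide. You also get explicit rates: $(1-e^{-h/\tau})^{2d}$ inside and $Ke^{-c/\tau}$ outside. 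Finally, you make explicit that the unit family must depend on $\delta$. The statement leaves this implicit, and it is unavoidable for general $D$.

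Two small repairs are needed.
\begin{itemize}
\item A Euclidean gap $c$ between $x$ and a box only forces a gap of $c/\sqrt d$ in some single coordinate. The exterior estimate should therefore read $m_k(x)\le e^{-(\delta-2\sqrt d\,h)/(\sqrt d\,\tau)}$; nothing else changes.
\item Your filter ``keeping only blocks inside $D^{2\sqrt d h}$'' is vacuous, because every block around a cube meeting $D$ already lies within $2\sqrt d\,h$ of $D$. Say so explicitly, since the interior step needs the block around whichever cube contains $x$.
\end{itemize}
Note also that the interior lower bound never uses $\delta$, so it in fact holds on all of $D$.
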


\noindent Axis-aligned intervals are thus \emph{not} a representational
bottleneck on the \emph{shape} of the inlier region; the residual error
concentrates only in an arbitrarily thin shell around $\partial D$, where no
continuous score can resolve a hard indicator. This is a capacity statement:
whether training \emph{realizes} such a cover is empirical, answered by the
learned boxes localizing inliers (Fig.~\ref{fig:scatter}) and by accuracy
saturating after a few units (Fig.~\ref{fig:ablation}). The operative detection
result remains Thm.~\ref{prop:clip}.

\section{Turning Intervals into Auditable Candidate Constraints}
\label{sec:lfi}

The bottleneck is inspectable, but a practitioner needs a scalar per
(unit, feature) saying how diagnostic that interval is. The score also has to
work without anomaly labels, since the detector is unsupervised. The simplest
way to get one is to read it off the trained model itself.

\paragraph*{Label-free importance (LFI)}
LFI is a \emph{global}, model-level diagnostic: it scores a (unit, feature)
pair, not a single flagged instance. The per-instance question -- \emph{which
features made \textbf{this} point anomalous} -- is answered directly by
Cor.~\ref{cor:sparse}: the active coordinates the point violates are exactly
the ones that suppress its matching units and drive up its error.
For unit $k$ and feature $j$, write $s_{kj}\!\in\![0,1]$ for the EMA soft
support (Sec.~\ref{sec:decoder}), $\Delta_{kj}$ for the learned half-width,
and $\mathrm{range}_j{=}2$ for the $[-1,1]$ feature range. Let
$\mathrm{stab}_{kj}\!\in\![0,1]$ be the inverse spread of the interval bounds
across seeds. Latent units carry no canonical order, so $\mathrm{stab}_{kj}$ is
computed \emph{after} a per-dataset alignment of units across seeds (matched by
proximity of their $(m_{kj},\Delta_{kj})$ vectors); on a single-seed run it is
set to $1$ and the factor is inert. We score (unit, feature) pairs by
\[
\mathrm{LFI}_{kj}\;=\;
\underbrace{s_{kj}}_{\text{support}}\,\cdot\,
\underbrace{(1-\Delta_{kj}/\mathrm{range}_j)}_{\text{tightness}}\,\cdot\,
\underbrace{\mathrm{stab}_{kj}}_{\text{stability}}\;\in\;[0,1].
\]
The three factors are simply how often the interval fires on inliers
($s_{kj}$), how much narrower it is than the full feature range
($\Delta_{kj}$ small), and how repeatable it is across re-runs. None of them
need an anomaly label. To pick a unit to display, we sum the top-$N$ LFI
values across its features; within that unit, we keep the top-$N$ features by
LFI. This is the selection rule behind Fig.~\ref{fig:scatter}. Stability is
validated matching-free at the feature level (App.~\ref{sec:robust}: mean
cross-seed Spearman $\rho\!\ge\!0.74$).

\paragraph*{Faithfulness of the LFI explanation}
Post-hoc attributions such as SHAP~\cite{lundberg2017shap} or per-feature
residuals from TCCM~\cite{tccm2025} are computed after the fact and can
disagree with the score they explain. \method{} does not have this gap: a
point's violated constraints are exactly the suppressed units
(Cor.~\ref{cor:sparse}), and that suppression is what moves the reconstruction.
The explanation and the score share one mechanism, so no second model has to
align them. App.~\ref{sec:xai-quant} evaluates the \emph{global} LFI feature
ranking against the same ranking obtained by aggregating KernelSHAP, integrated
gradients, gradient$\times$input and ECOD over a pool of flagged anomalies,
inside the axis-aligned regime \method{} targets. LFI is a global
\emph{interpretability} readout, not a per-instance explainer; within that
regime the zero-query LFI ranking agrees with the aggregated KernelSHAP /
integrated-gradients rankings at Spearman $\rho\approx0.7$, is as stable across
re-trainings as KernelSHAP, and passes a model-randomisation sanity check --
i.e.\ it carries the same global information the post-hoc methods buy with
hundreds of model queries, at no cost.

\paragraph*{Is LFI a contribution?}
We treat LFI as a minor, auxiliary contribution. The formula is a product of
three quantities the model already keeps, not a new theoretical result. Its
role is to make the interval bottleneck usable when no anomaly labels are at
hand: a practitioner can rank candidate constraints from the trained model
alone. We do \emph{not} claim LFI is superior to supervised attributors such as
SHAP -- it is a global, label-free constraint-ranking tool, not a per-instance
explainer (App.~\ref{sec:xai-quant}). LFI is the score to reach for when labels
are not available, which is the usual setting in unsupervised anomaly detection.

\section{Experiments}
\label{sec:exp}

\paragraph*{Protocol (stated up front)}
We evaluate on the 48 native-numerical ADBench datasets~\cite{han2022adbench}
(12 small, 15 medium, 11 large, 10 high-dimensional; full list in App.~\ref{sec:data}) against 22 baselines via the TCCM/ADBench
codebase~\cite{tccm2025,han2022adbench}. The remaining ADBench tasks are
pre-computed vision/text embedding vectors whose coordinates are not
individually meaningful, so the per-feature readability that motivates
\method{} does not apply; we match the tabular scope of our closest baseline
TCCM~\cite{tccm2025}. \textbf{\method{} is trained
semi-supervised on inliers only}, with a $40\%$ test split, 10 seeds, and a
single configuration fixed across datasets (lr $5\!\times\!10^{-5}$, $K{=}200$,
$\tau{=}0.1$, $\rho{=}0.999$, 1000 epochs, 2-layer decoder; batch $64/512/1024$
by $n$). All methods share the MinMax $[-1,1]$ normalization and the inductive
baselines (TCCM, AutoEncoder, VAE) the same inlier-only training; this single
shared protocol is the one under which \method{}'s clipping bound and Lipschitz
slack transfer across datasets (Sec.~\ref{sec:norm}). Inlier-only training matches the deployment-relevant regime: a curated normal
reference set, as in monitoring and fraud. As this is a clean-training setup, we
restrict headline claims to inductive methods and report unsupervised
contaminated-data detectors as ``competitive,'' not ``beaten.'' Complete per-dataset tables, dataset characteristics, per-scale CD diagrams,
and ablations are in the appendices.\footnote{Code:
\url{https://github.com/DiffInt/diffint}.}

\paragraph*{Handling of missing baseline runs}
Some baselines did not complete on the largest datasets within budget (KPCA and
LMDD on $8$--$9$, MO\_GAAL on $6$, KDE/OCSVM/LUNAR on $3$--$4$, and
AutoEncoder/CBLOF/DIF/others on $1$ each) and appear as ``--''. We keep all $48$
datasets and assign each missing run the \emph{worst rank} (the number of
methods) on that dataset, the standard conservative convention; this mildly
favors methods that ran everywhere, \method{} included.

\subsection{Anomaly detection accuracy}
Fig.~\ref{fig:cd} shows the ROC--AUC critical-difference diagram over all 48
datasets, with runs that did not complete floored to the worst rank (Friedman
$\chi^2{=}454.36$, $p{<}10^{-80}$; Nemenyi at 95\%, CD$=5.01$). Under this
common $[-1,1]$ protocol, \method{} attains the \emph{best mean rank} (4.10),
ahead of LUNAR (6.00), AutoEncoder (7.42), TCCM (8.22), GMM (8.56), IForest
(8.60), and VAE (8.90). The Nemenyi bar links \method{} with this leading
cluster of seven methods, so \method{} is top-ranked but \emph{not} statistically
separated from them; the cluster is significantly ahead of the remaining
$\sim$16 detectors.
This pooled diagram mixes inlier-only and contaminated-data methods.
Table~\ref{tab:strat} re-analyzes the \emph{same} scores within regimes
(App.~\ref{sup:strat} for the per-scale breakdown): \method{} leads the
inlier-only pool by a wide margin ($1.69$/$1.71$ of $4$), and against the
contaminated pool on the complete-case subset (no rank imputation) it is first
or tied with LUNAR (AUPR gap $0.11$) -- so the headline ordering is not an
artefact of pooling regimes or of the missing-run convention.

\begin{table}[t]
\centering\footnotesize
\setlength{\tabcolsep}{4pt}
\caption{\textbf{Stratified and complete-case re-analysis}
(App.~\ref{sup:strat}). Mean rank recomputed \emph{within} training regimes
from the same per-dataset scores, no model re-run (lower is better).
\emph{Inlier-only pool}: \method{}, TCCM, AutoEncoder, VAE re-ranked $1$--$4$
per dataset. \emph{Contaminated pool}: \method{} inserted into the $19$-method
contaminated set ($20$ methods). \emph{Complete-case}: the $39/48$ datasets on
which every method finished (no imputation). The top-cluster result survives
both the stratification and the removal of imputation.}
\label{tab:strat}
\begin{tabular}{l cc}
\toprule
 & \shortstack{Rank\\(ROC--AUC)} & \shortstack{Rank\\(AUPR)}\\
\midrule
\multicolumn{3}{l}{\emph{Inlier-only pool} (of $4$), all $48$ datasets}\\
\quad\textbf{\method{}}      & \textbf{1.69} & \textbf{1.71}\\
\quad AutoEncoder            & 2.50 & 2.58\\
\quad TCCM                   & 2.83 & 2.62\\
\quad VAE                    & 2.98 & 3.08\\
\midrule
\multicolumn{3}{l}{\emph{\method{} vs.\ contaminated-data pool} (of $20$)}\\
\quad all $48$ datasets, \method{}          & 3.35 & 3.46\\
\quad complete-case ($39$), \method{}       & 3.59 & 3.74\\
\quad complete-case ($39$), best contam.\ (LUNAR) & 4.03 & 3.85\\
\bottomrule
\end{tabular}
\end{table}

\begin{figure*}[t]
\centering
\begin{subfigure}{0.49\linewidth}
\centering
\includegraphics[width=\linewidth]{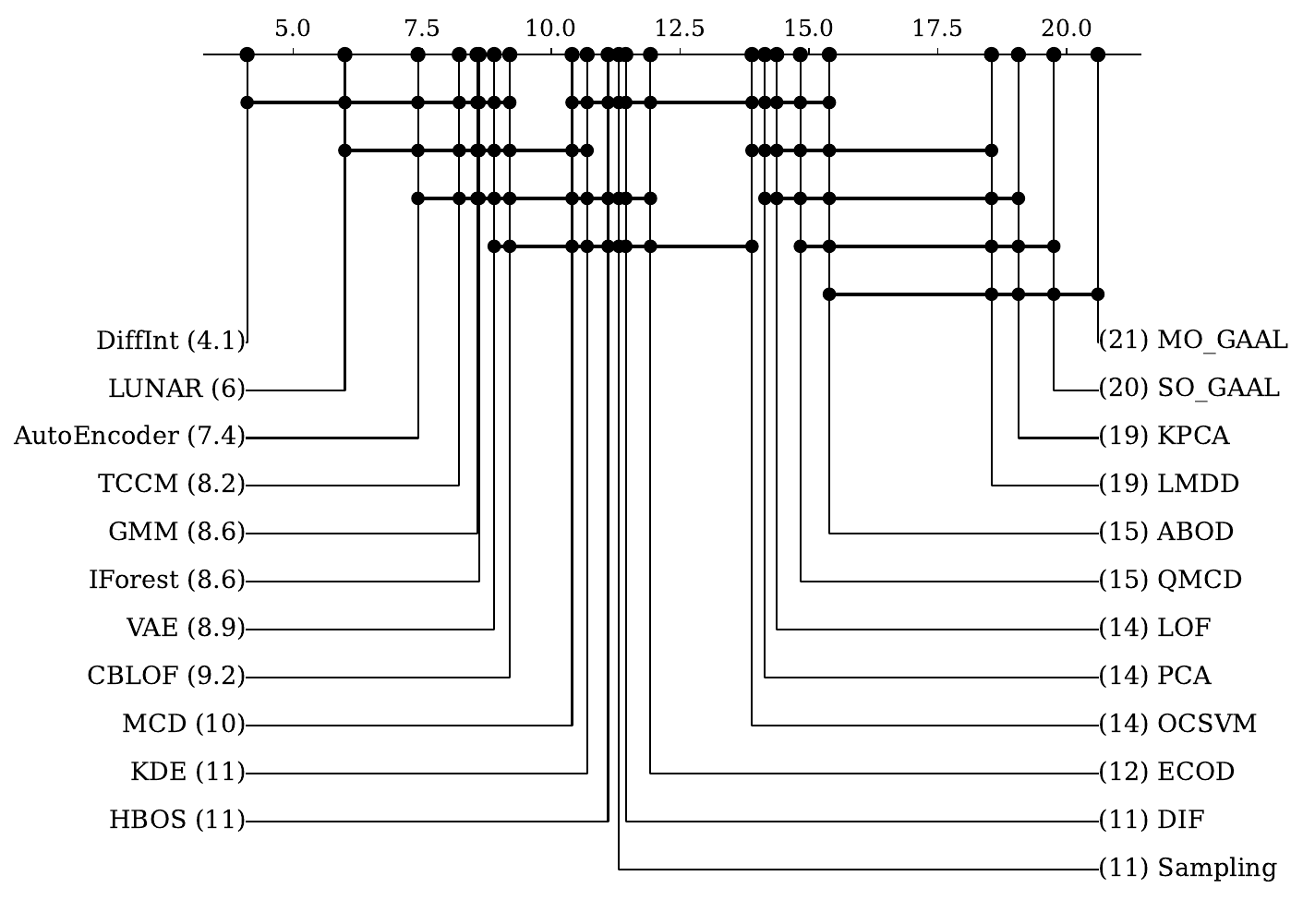}
\caption{ROC--AUC (mean rank \method{}: 4.10).}
\label{fig:cd-auc}
\end{subfigure}\hfill
\begin{subfigure}{0.49\linewidth}
\centering
\includegraphics[width=\linewidth]{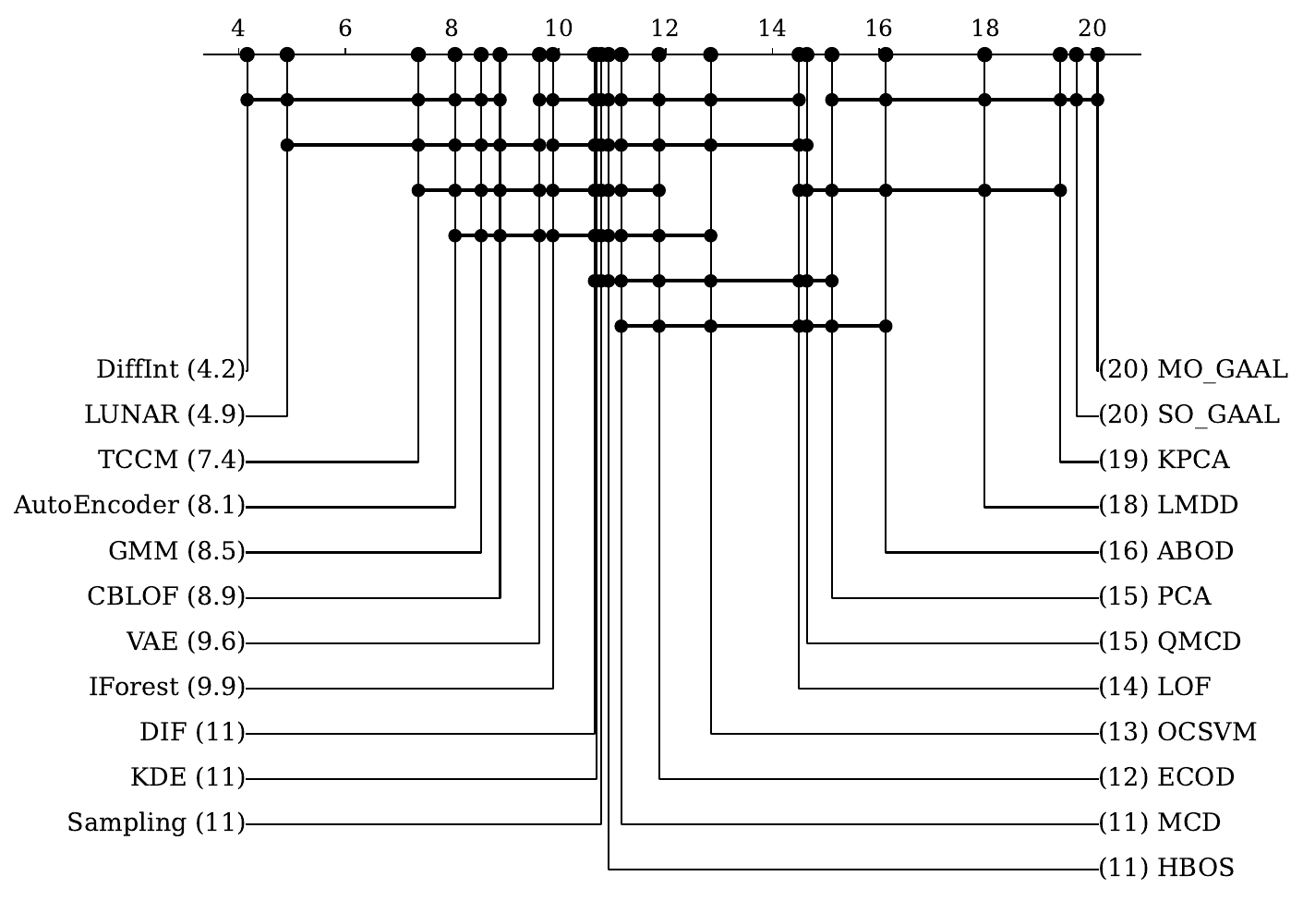}
\caption{AUPR (mean rank \method{}: 4.16).}
\label{fig:cd-aupr}
\end{subfigure}
\caption{Critical-difference diagrams across the $48$ ADBench datasets and
$23$ methods, runs that did not complete floored to worst rank. \method{}
attains the best mean rank on both metrics and is statistically tied with the
same leading cluster (LUNAR, AutoEncoder, TCCM, GMM, IForest, VAE); the
cluster is significantly ahead of the remaining $\sim$16 detectors.}
\label{fig:cd}
\end{figure*}

\begin{table}[t]
\centering
\caption{Mean rank by dataset scale (lower is better; missing runs floored to
worst). \textbf{Bold}: \method{} leads strictly. $^\dagger$: tied with the
best baseline within the per-scale Nemenyi CD ($\approx\!9$--$11$ at
$\alpha{=}0.05$). LU=LUNAR, IF=IForest, AE=AutoEncoder, CB=CBLOF.}
\label{tab:ranks}
\footnotesize
\setlength{\tabcolsep}{3pt}
\begin{tabular}{l@{\hspace{4pt}}cc@{\hspace{4pt}}cc}
\toprule
       & \multicolumn{2}{c}{Rank (ROC--AUC)} & \multicolumn{2}{c}{Rank (AUPR)}\\
\cmidrule(lr){2-3}\cmidrule(lr){4-5}
Regime ($N$) & \method{} & best BL & \method{} & best BL\\
\midrule
\textbf{Overall (48)} & \textbf{4.10} & LU 6.00 & \textbf{4.16} & LU 4.91\\
\midrule
Small (12)  & \textbf{4.42} & LU 6.75 & \textbf{4.33} & CB 6.67\\
Medium (15) & \textbf{3.77} & LU 4.23 & \textbf{3.57} & LU 3.60\\
Large (11)  & \textbf{3.18} & AE 4.73 & \textbf{2.73} & LU 3.91\\
High-d (10) & \textbf{5.25} & LU 5.75 & $6.40^\dagger$ & LU 5.45\\
\bottomrule
\end{tabular}
\end{table}

Table~\ref{tab:ranks} and the per-scale CD diagrams (App.~\ref{sec:auc}) agree:
\method{} attains the lowest mean rank strictly in $7$ of the $8$
(regime$\,\times\,$metric) cells of Table~\ref{tab:ranks}; the $8$th
(high-d AUPR, $6.40$ vs.\ LUNAR $5.45$) has LUNAR slightly ahead but the
gap is well within the per-scale Nemenyi CD ($\approx\!11$ at $n{=}10$), so
the two methods are statistically tied there. \method{} is therefore the
leading or co-leading method on every regime$\,\times\,$metric cell.
Six robustness checks are reported in App.~\ref{sec:robust}: sensitivity to
missing-run handling, a $\tau$ sweep, an active-coordinate threshold check,
the per-layer spectral norms of the certified decoder, an inference-clamping
isolation, and LFI stability across seeds. The top-cluster conclusion and
the interpretability claims hold under all of them.

\subsection{Reconstruction sanity check vs.\ a binary differentiable model}
Table~\ref{tab:binaps} compares reconstruction MAE against
BinaPs~\cite{fischer_differentiable_2021}, the closest differentiable model.
\textbf{Caveat:} BinaPs targets binary data and does not optimize numerical
reconstruction, so this is a sanity check that numerical intervals reconstruct
better than binarized patterns, \emph{not} a pattern-mining comparison and
not evidence about pattern-set quality. Under that caveat, \method{}
reconstructs numerical inliers far better (Table~\ref{tab:binaps}) and, avoiding
the combinatorial cost of binary pattern search, trains one to two orders of
magnitude faster than BinaPs.

\paragraph*{Cost relative to deep baselines}
Asymptotically \method{} is in the deep-baseline class: per-sample inference is
$O(Kd)$ memberships plus a fixed MLP decoder and an $O(K)$ softmax, one
forward pass with no neighbour search, unlike memory-based detectors such as
LUNAR whose $k$NN queries grow with $n$. Table~\ref{tab:binaps} confirms this in
wall-clock at a matched $1000$-epoch budget: \method{} trains faster than
LUNAR on most datasets and markedly on the largest (shuttle $194$s vs.\ $658$s).

\begin{table}[t]
\centering
\caption{Reconstruction MAE (vs.\ BinaPs, sanity check) and training time (s);
lower better, per-row best \green{green}. Times: matched $1000$-epoch budget for
\method{}/LUNAR; ``OOM'': out of memory. \method{} trains $1$--$2$ orders
faster than BinaPs and faster than LUNAR on most datasets.}
\label{tab:binaps}
\footnotesize
\setlength{\tabcolsep}{3pt}
\begin{tabular}{l cc c cc}
\toprule
& \multicolumn{2}{c}{Reconstruction MAE} && \multicolumn{2}{c}{Training time (s)}\\
\cmidrule(lr){2-3}\cmidrule(lr){5-6}
Dataset & BinaPs & \method{} && \method{} & LUNAR\\
\midrule
arrhythmia & 0.296 & \green{0.121} && 30.2 & \green{7.9}\\
cardio     & 0.305 & \green{0.064} && \green{3.0} & 20.6\\
glass      & 0.336 & \green{0.043} && 9.2 & \green{7.5}\\
ionosphere & 0.252 & \green{0.129} && 16.1 & \green{7.2}\\
optdigits  & 0.281 & \green{0.203} && 60.1 & \green{57.6}\\
pima       & 0.295 & \green{0.070} && 8.6 & \green{8.1}\\
satellite  & 0.247 & \green{0.035} && \green{38.7} & 52.6\\
satimage-2 & 0.257 & \green{0.033} && \green{54.6} & 65.7\\
shuttle    & 0.057 & \green{0.001} && \green{193.5} & 658.0\\
vowels     & OOM & \green{0.061} && \green{2.7} & 31.3\\
wbc        & OOM & \green{0.067} && \green{2.6} & 15.9\\
\bottomrule
\end{tabular}
\end{table}

\subsection{Effect of bottleneck width $K$}
Fig.~\ref{fig:ablation} varies $K\!\in\!\{5,10,20,50,100,200,300\}$ on three
representative datasets. ROC--AUC (\subref{fig:ablation-auc}) saturates well
below $K{=}200$ on every dataset: once enough capacity is available, adding
units neither helps nor hurts. The companion panel
(\subref{fig:ablation-outmae}) confirms \emph{selective reconstruction}: as $K$
grows, inlier reconstruction sharpens but the mean MAE on test \emph{outliers}
stays high, the empirical signature of the clipping margin
(Thm.~\ref{prop:clip}). We therefore fix $K{=}200$ as a generous capacity
default that absorbs dataset complexity without per-dataset tuning; it is
\emph{not} interpreted as a minimal pattern set. Practitioners on a tight
compute budget can safely halve $K$ without measurable accuracy loss.

\begin{figure*}[t]
\centering
\begin{subfigure}{0.49\linewidth}
\centering
\includegraphics[width=\linewidth]{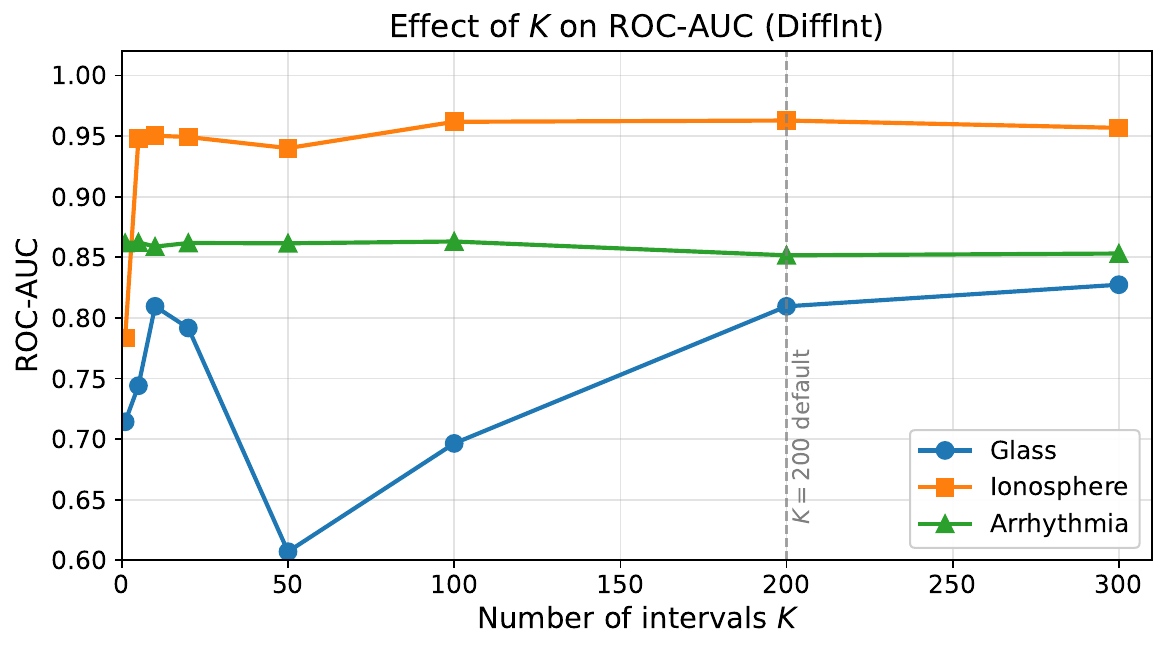}
\caption{ROC--AUC saturates after a few units.}
\label{fig:ablation-auc}
\end{subfigure}\hfill
\begin{subfigure}{0.49\linewidth}
\centering
\includegraphics[width=\linewidth]{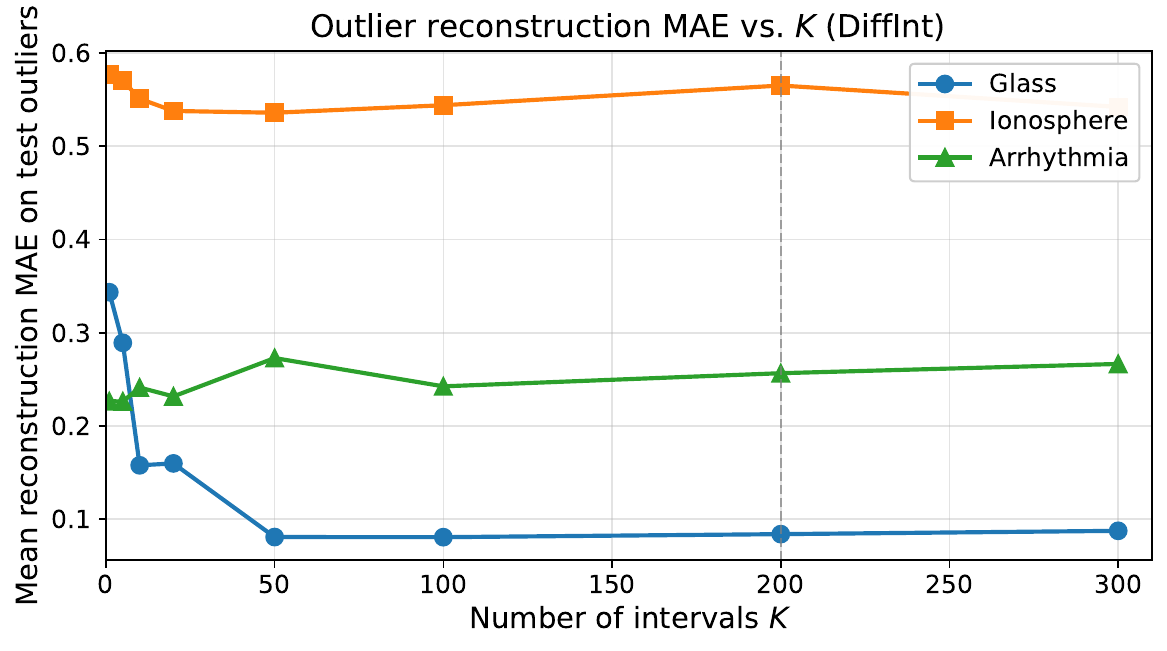}
\caption{Mean MAE on test outliers stays high.}
\label{fig:ablation-outmae}
\end{subfigure}
\caption{Effect of $K$ on three ADBench datasets. Detection accuracy saturates
quickly (a); meanwhile the mean reconstruction error on test outliers does not
collapse with growing capacity (b), so \method{} reconstructs the inlier
manifold faithfully while keeping out-of-support points far from their input
(selective reconstruction, consistent with Thm.~\ref{prop:clip}). We fix
$K{=}200$ as a capacity dial, not a pattern-set size.}
\label{fig:ablation}
\end{figure*}

\subsection{Ablations: normalization and contamination}
\label{sec:ablate}
Table~\ref{tab:ablate} reports two ablations on seven small ADBench
datasets ($n\!\ge\!100$; $K{=}200$, 3 seeds; Hepatitis, $n{=}80$, is too
small for a stable inlier-only split): preprocessing sensitivity and contaminated
``clean'' training. The Lipschitz assumption of Thm.~\ref{prop:clip} is examined
separately in Sec.~\ref{sec:certify}.
\textbf{(a) Normalization.} Mapping into a bounded symmetric domain is what
matters: dropping it costs $\approx3$ AUROC points, and among bounded/standardized
choices $[-1,1]$ is the \emph{most accurate} ($0.833$ vs.\ StandardScaler $0.818$,
$[0,1]$ $0.824$; a $\le\!1.5$-point spread), so it is both the \emph{theory-driven}
choice under which one configuration and the clipping bound transfer
(Sec.~\ref{sec:norm}) and the top performer.
\textbf{(b) Training contamination.} Anomalies in the ``inlier'' set degrade
AUROC gracefully ($0.833\!\to\!0.751\!\to\!0.735\!\to\!0.720$ at $0/5/10/20\%$),
staying well above chance: the EMA support averages over batches, so a minority
cannot dominate the intervals.

\begin{table}[t]
\centering
\caption{Ablations on seven small ADBench datasets ($n\!\ge\!100$; $K{=}200$,
3 seeds; mean ROC--AUC). (a) input normalization; (b) fraction of anomalies in the
training set.}
\label{tab:ablate}
\footnotesize
\setlength{\tabcolsep}{4pt}
\begin{tabular}{lcccc}
\toprule
\multicolumn{5}{l}{\textbf{(a) Normalization}}\\
            & none  & $[0,1]$ & standard & $[-1,1]$\\
ROC--AUC    & 0.805 & 0.824   & 0.818    & 0.833\\
\midrule
\multicolumn{5}{l}{\textbf{(b) Training contamination}}\\
            & 0\%   & 5\%   & 10\%  & 20\%\\
ROC--AUC    & 0.833 & 0.751 & 0.735 & 0.720\\
\bottomrule
\end{tabular}
\end{table}

\subsection{Certifying the clipping margin: does it hold?}
\label{sec:certify}
We test Thm.~\ref{prop:clip} directly with the Certified-Lipschitz \method{}
($K{=}200$, 3 seeds) on low-membership test points (inside no learned box).
Table~\ref{tab:certify} and Fig.~\ref{fig:bridge}: the bound is respected for
\emph{every} out-of-support point, the upper bound $L=\prod_l\|W_l\|_2$ is an
explicit constant near $1$, and the reconstructions collapse toward the
empty-code image $g_\theta(f_0)$ (within $\approx0.4$ of the inlier amplitude).
Certification trades some accuracy ($0.908\!\to\!0.851$ mean AUROC), so the
unconstrained decoder stays the default and the certified variant is used when a
guarantee is required; the clipping mechanism is present in both.

\begin{figure}[t]
\centering
\includegraphics[width=\linewidth]{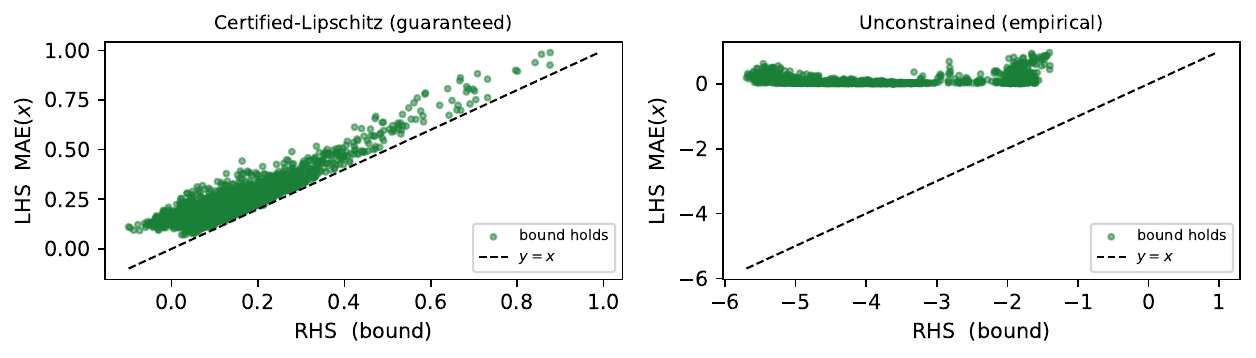}
\caption{Theorem-to-practice bridge: $\mathrm{MAE}(x)$ (LHS) vs.\ the
clipping-margin lower bound (RHS) for out-of-support points. \emph{Left}:
certified decoder, bound tight and respected ($0$ violations). \emph{Right}:
unconstrained, the inequality holds but the bound is loose and uncertified (note
the $x$-scale), an empirical phenomenon (Rem.~\ref{rem:spectral}).}
\label{fig:bridge}
\end{figure}

\begin{table}[t]
\centering\footnotesize
\caption{Certifying Thm.~\ref{prop:clip} (Certified-Lipschitz \method{}).
$L$: decoder Lipschitz upper bound; \emph{collapse}: mean distance of
out-of-support reconstructions to $g_\theta(f_0)$, as a fraction of inlier
amplitude (lower $=$ tighter collapse); \emph{bound}: fraction of out-of-support
points satisfying the inequality. AUROC is for the unconstrained (default) and
certified decoders.}
\label{tab:certify}
\setlength{\tabcolsep}{4pt}
\begin{tabular}{l cc c c c}
\toprule
& \multicolumn{2}{c}{AUROC} & & & \\
\cmidrule(lr){2-3}
Dataset & unc. & cert. & $L$ & collapse & bound \\
\midrule
glass      & 0.891 & 0.744 & 1.05 & 0.52 & 100\% \\
wbc        & 0.930 & 0.903 & 1.03 & 0.29 & 100\% \\
ionosphere & 0.953 & 0.828 & 1.04 & 0.24 & 100\% \\
pima       & 0.696 & 0.686 & 1.04 & 0.53 & 100\% \\
cardio     & 0.976 & 0.955 & 1.06 & 0.45 & 100\% \\
satimage-2 & 0.999 & 0.987 & 1.07 & 0.38 & 100\% \\
\midrule
Mean       & 0.908 & 0.851 & 1.05 & 0.40 & 100\% \\
\bottomrule
\end{tabular}
\end{table}

\subsection{Interpretability: do the intervals localize inliers?}
Fig.~\ref{fig:scatter} projects the top \method{} interval onto feature pairs.
The interval is picked by the label-free importance LFI (Sec.~\ref{sec:lfi}),
so the selection itself uses no anomaly labels. The bounds the model learns
form an explicit, auditable candidate constraint that concentrates inliers
and suppresses outliers through the clipping margin (Thm.~\ref{prop:clip},
Cor.~\ref{cor:sparse}).

\begin{figure*}[!t]
\centering
\begin{subfigure}{0.49\linewidth}
\centering
\includegraphics[width=0.9\linewidth]{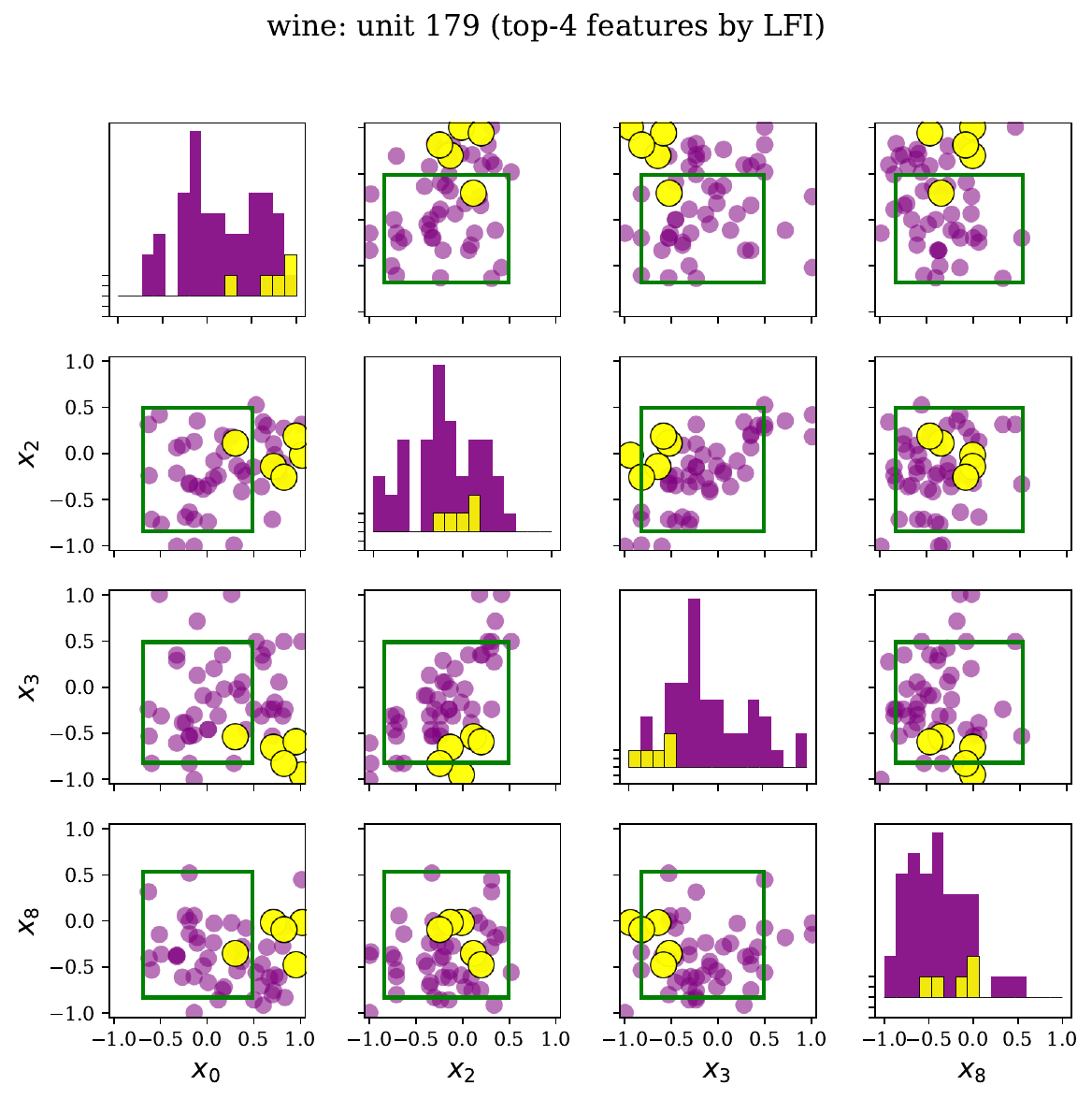}
\caption{\textsc{Wine}: unit $179$, candidate constraint
$x_{0}\!\in\![-0.68,0.48]\,\wedge\,x_{2}\!\in\![-0.84,0.50]\,\wedge\,
x_{3}\!\in\![-0.82,0.49]\,\wedge\,x_{8}\!\in\![-0.83,0.53]$.}
\label{fig:scatter-wine}
\end{subfigure}\hfill
\begin{subfigure}{0.49\linewidth}
\centering
\includegraphics[width=0.9\linewidth]{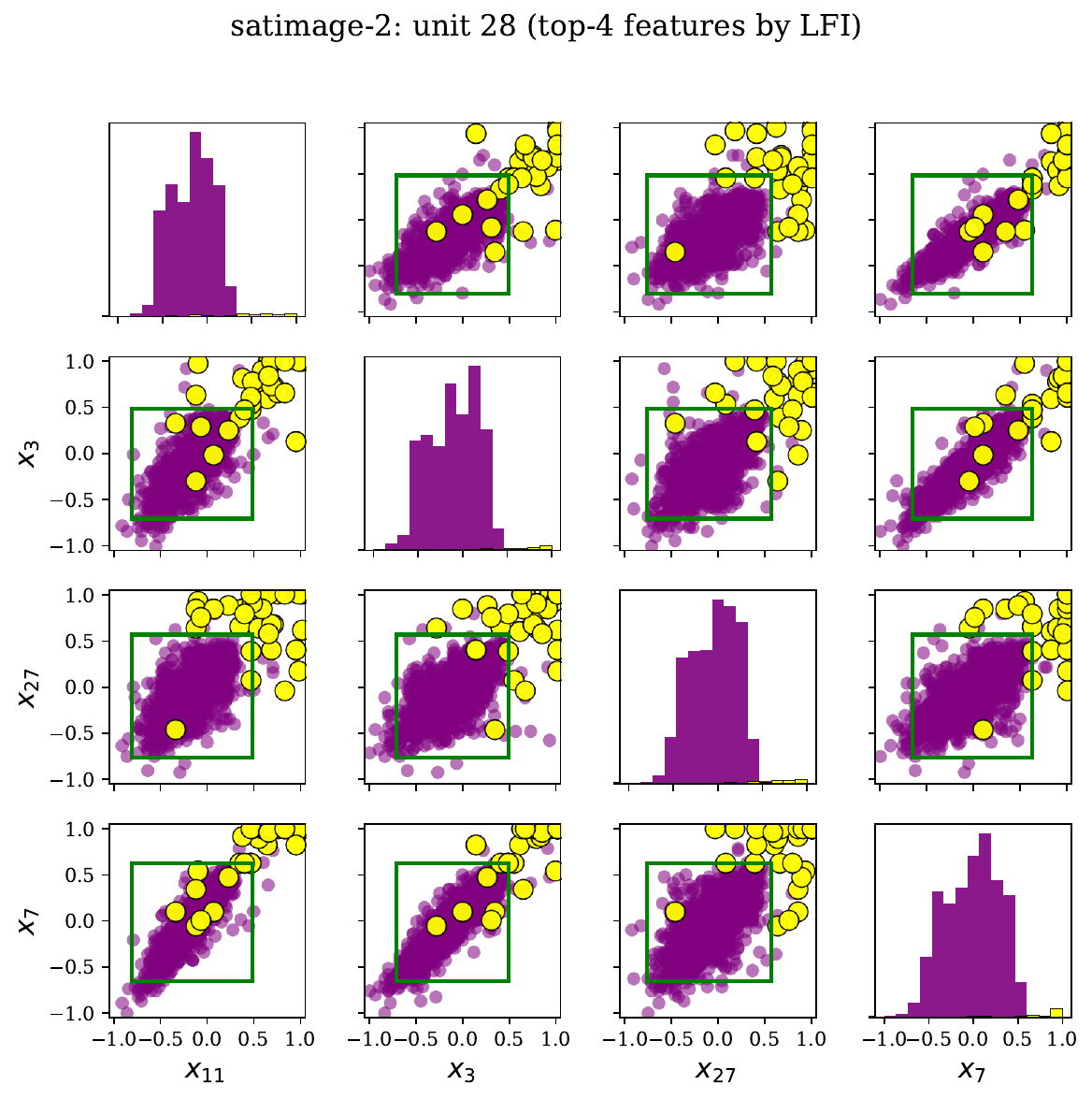}
\caption{\textsc{Satimage-2}: unit $28$, candidate constraint
$x_{3}\!\in\![-0.71,0.49]\,\wedge\,x_{7}\!\in\![-0.65,0.63]\,\wedge\,
x_{11}\!\in\![-0.80,0.48]\,\wedge\,x_{27}\!\in\![-0.76,0.57]$.}
\label{fig:scatter-sat}
\end{subfigure}
\caption{Pairwise 2D projections of the top \method{} interval on two ADBench
datasets, selected \emph{without any anomaly label} by the label-free
importance (LFI, Sec.~\ref{sec:lfi}); bounds are in the $[-1,1]$-scaled domain.
The green boxes concentrate the inlier mass (purple) while a large fraction of
outliers (yellow) leave the box on at least one axis, most visibly in the
upper-right cluster of \textsc{Satimage-2}. Separation at the box boundary
is not strict, but a violation in any active coordinate is enough to suppress
this unit's contribution (Cor.~\ref{cor:sparse}) and trigger the clipping
margin (Thm.~\ref{prop:clip}).}
\label{fig:scatter}
\end{figure*}

\subsection{Discussion: when does interval structure help?}
Theory and experiment agree on one mechanism: the clipping margin
(Thm.~\ref{prop:clip}, Cor.~\ref{cor:sparse}) scales with the fraction of
\emph{informative} coordinates, so the gains are largest on small and medium
data, where the axis-aligned bias matches the typical anomaly profile of a few
abnormal features. The gains carry to large data (rank $3.18$ on AUC, $2.73$
on AUPR) and to high-dimensional data, where \method{} still leads on ROC--AUC
($5.25$ vs.\ LUNAR $5.75$); the only cell where a competitor (LUNAR) edges
\method{} in mean rank is high-dimensional AUPR, and that gap is within the
per-scale Nemenyi CD, i.e.\ statistically a tie. On top of this, the structured bottleneck delivers what an
entangled code cannot: a \emph{mechanistic} reason \emph{why} a point is anomalous,
with auditable, feature-grounded candidate constraints (Sec.~\ref{sec:lfi},
Fig.~\ref{fig:scatter}).

\subsection{Limitations}
\label{sec:limits}
Three settings sit outside the assumptions under which \method{} is evaluated.
(1) \emph{Heavy-tailed features}: min--max scaling assumes finite per-feature
extrema. Features with heavy tails (e.g.\ income, latency) saturate the
endpoints and warp the $[-1,1]$ geometry the temperature $\tau$ and the
clipping margin rely on; a robust quantile mapping into $[-1,1]$ would preserve
$\tau$, but we do not evaluate it here.
(2) \emph{Concept drift}: the protocol assumes a stationary inlier distribution.
On non-stationary streams the scaler and the EMA support must be refreshed (a
short re-fit on a recent inlier window suffices in principle), but drift is
unevaluated in this paper.
(3) \emph{Missing values}: the current implementation assumes complete
features and treats imputation as a preprocessing step. A missing-aware
membership (e.g.\ a neutral $I_{kj}{=}1$ for missing $x_j$, so the unit
abstains rather than firing) is straightforward but not measured here.
A quantitative comparison of the global LFI ranking against aggregated
post-hoc attributors is in App.~\ref{sec:xai-quant}; two further points -- a
robust/quantile normalization ablation and a comparison to diffusion /
retrieval / tabular-foundation-model detectors -- are outside the scope of
this evaluation and discussed in App.~\ref{sup:rest}.

\section{Conclusion}
We presented \method{}, an autoencoder for tabular anomaly detection whose
bottleneck is a set of soft, learnable numerical intervals. Three pieces fit
together: a differentiable architecture that needs no discretization, a
clipping-margin result (Thm.~\ref{prop:clip}, certified under a Lipschitz
decoder and empirical otherwise) explaining why reconstruction error is a
valid score on this bottleneck, and a closed-form label-free importance, LFI
(Sec.~\ref{sec:lfi}), that turns each (unit, feature) pair into an auditable
candidate constraint without consulting any anomaly label. On 48 ADBench
datasets, \method{} reaches the lowest mean rank on both metrics ($4.10$
ROC--AUC, $4.16$ AUPR) and sits in a seven-method cluster the Nemenyi test
cannot separate; LUNAR slightly edges it only on high-dimensional AUPR, and
that gap is within the per-scale CD. The explanation and the score share one
mechanism, so the intervals read off the bottleneck are exactly what drives
detection. A natural next step is to add a soft-$L_0$ gate and an MDL term so
the bottleneck becomes a compact \emph{interval-pattern-set} miner suitable
for pattern-mining benchmarks alongside anomaly detection.

\section*{Acknowledgment}
This work was supported by French state aid managed by the National Research
Agency (ANR) under the France 2030 program, with the reference ``PANDORA''
(ANR-24-CE23-0950). The authors thank the anonymous reviewers for their
constructive feedback.

\bibliographystyle{IEEEtran}
\bibliography{references}

\appendices
\section{Proofs}
\label{app:proofs}
We give the complete proofs of the results stated (with sketches) in
Sec.~\ref{sec:theory}.

\subsection{Proof of Theorem~\ref{prop:clip} (clipping margin)}
\begin{proof}
Write the code $f_k(x)=\mathrm{softmax}_k(\ell_k)$,
$\ell_k=\sum_j\log I_{kj}(x_j)$. Let $V$ be the set of active coordinates on
which $x$ exceeds the unit in a fixed direction (say above the upper bound) by
margin $\ge\eta$, so $I_{kj}(x_j)\le\beta=\sigma(-\eta/\tau)\in(0,\tfrac12)$ for
$j\in V$. There the upper sigmoid saturates and
$\log I_{kj}(x_j)=-\tfrac1\tau(x_j-\overline c_{kj})+O(\beta)$, whose
$x$-dependent part $-x_j/\tau$ is \emph{identical across all units} $k$ (this
uses the same-side hypothesis: for a below-violation the common part is
$+x_j/\tau$, still unit-independent). Since here \emph{all} active coordinates
are violated, the only non-saturated terms are the inactive ones, which
contribute $\log I_{kj}\approx0$. Hence $\ell_k=c(x)+b_k+O(|V|\beta)$ with
$c(x)=-\tfrac1\tau\sum_{j\in V}x_j$ common to all $k$ and
$b_k=\tfrac1\tau\sum_{j\in V}\overline c_{kj}$ $x$-independent. As softmax is
shift-invariant, $f(x)=\mathrm{softmax}(b+O(|V|\beta))$, so
$\|f(x)-f_0\|_2\le\kappa(\beta)=O(|V|\beta)$ with $f_0=\mathrm{softmax}(b)$ the
fixed $x$-independent empty-activation code. Lipschitzness of $g_\theta$ gives
$\|g_\theta(f(x))-g_\theta(f_0)\|_2\le L\,\kappa(\beta)$, and the reverse
triangle inequality with $\|\cdot\|_1\le\sqrt d\,\|\cdot\|_2$,
$\|x-g_\theta(f(x))\|_1\ge\|x-g_\theta(f_0)\|_1-\sqrt d\,L\,\kappa(\beta)$,
divided by $d$, yields $\mathrm{MAE}(x)\ge\tfrac1d\|x-g_\theta(f_0)\|_1
-\tfrac{L}{\sqrt d}\kappa(\beta)$.

\emph{Partial violations (Cor.~\ref{cor:sparse}).} If only $s_k$ active
coordinates of unit $k$ are violated, the remaining active coordinates
contribute $x$-dependent, unit-specific terms, so $f(x)$ is \emph{not} forced to
a single $x$-independent code. What survives is the per-unit bound
$\ell_k=\sum_j\log I_{kj}\le\sum_{j\in V\cap A_k}\log I_{kj}\le s_k\log\beta$
(every $\log I_{kj}\le0$), i.e.\ $\exp(\ell_k)\le\beta^{s_k}$: each unit's
unnormalized weight is suppressed geometrically in its number of violated active
coordinates, which is the graded statement of Cor.~\ref{cor:sparse}.
\end{proof}
Fig.~\ref{fig:toy} illustrates this: inliers are preserved while
out-of-support points are collapsed and thus incur a large error.

\subsection{EMA support and encoder regularity}
\begin{proposition}[EMA support: asymptotically unbiased, finite memory]
\label{prop:ema}
If the per-step support $Z_t\!\in\![0,1]$ is strictly stationary with mean
$\mu$, the recursion $S_{t+1}=\rho S_t+(1-\rho)Z_{t+1}$ ($\rho\!\in\![0,1)$) has
a unique stationary limit $S_\infty=(1-\rho)\sum_{m\ge0}\rho^mZ_{-m}$ with
$\mathbb{E}[S_\infty]=\mu$, i.i.d.\ variance $\sigma^2\tfrac{1-\rho}{1+\rho}$,
effective sample size $\tfrac{1+\rho}{1-\rho}$, and transient
$\mathbb{E}[S_t]-\mu=\rho^t(S_0-\mu)$.
\end{proposition}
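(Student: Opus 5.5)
The plan is to unroll the recursion and read every claim off the resulting geometric-weight representation. Iterating $S_{t+1}=\rho S_t+(1-\rho)Z_{t+1}$ from a starting value $S_0$ gives
\[
S_t=\rho^t S_0+(1-\rho)\sum_{m=0}^{t-1}\rho^m Z_{t-m}.
\]
Since $\rho<1$, the weights $(1-\rho)\rho^m$ sum to $1-\rho^t$.

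\textbf{Transient.} Take expectations and use stationarity, $\mathbb{E}Z_s=\mu$. This gives $\mathbb{E}S_t=\rho^t S_0+(1-\rho^t)\mu$, i.e.\ $\mathbb{E}S_t-\mu=\rho^t(S_0-\mu)$, which is the transient formula.

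\textbf{Stationary limit and unbiasedness.} For the limit I will realize the strictly stationary sequence as two-sided, $(Z_s)_{s\in\mathbb{Z}}$, which Kolmogorov extension allows. I then define $S_\infty=(1-\rho)\sum_{m\ge0}\rho^m Z_{-m}$. This series converges absolutely and surely, because $|Z|\le1$ and the weights are summable; in particular $S_\infty\in[0,1]$. The time-shifted version $S^{(t)}=(1-\rho)\sum_{m\ge0}\rho^m Z_{t-m}$ satisfies the recursion, and by strict stationarity its law does not depend on $t$, so it is a stationary solution.

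For uniqueness, take any two solutions driven by the same input. Their difference obeys $D_{t+1}=\rho D_t$, so $D_t=\rho^t D_0\to0$. The same contraction shows that any solution started from $S_0$ satisfies $|S_t-S^{(t)}|\le\rho^t$. Hence $S_t$ converges in law, and even almost surely under the coupling, to the law of $S_\infty$.

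Unbiasedness then follows by dominated convergence: $\mathbb{E}S_\infty=(1-\rho)\sum_m\rho^m\mu=\mu$.

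\textbf{Variance and effective sample size.} Under the i.i.d.\ assumption with variance $\sigma^2$, the cross terms vanish. This gives $\mathrm{Var}(S_\infty)=(1-\rho)^2\sum_m\rho^{2m}\sigma^2=\sigma^2\frac{(1-\rho)^2}{1-\rho^2}=\sigma^2\frac{1-\rho}{1+\rho}$. The effective sample size is by definition the $n_{\mathrm{eff}}$ for which a plain mean of $n_{\mathrm{eff}}$ i.i.d.\ draws has the same variance, $\sigma^2/n_{\mathrm{eff}}$. So $n_{\mathrm{eff}}=\frac{1+\rho}{1-\rho}$. As a consistency check, this also equals $(\sum w_m)^2/\sum w_m^2$ with $w_m=(1-\rho)\rho^m$.

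\textbf{Main obstacle.} The only delicate step is making the stationary limit rigorous: constructing the two-sided extension and justifying the interchange of expectation and infinite sum. Both are handled by the uniform bound $|Z|\le1$ together with the geometric weights, so I would state the extension explicitly and then invoke dominated convergence. Everything else is direct computation.
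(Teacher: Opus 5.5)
Your proof is correct and follows essentially the paper's route: unrolling the recursion for the transient, a contraction argument for the unique stationary limit $S_\infty$, linearity for $\mathbb{E}[S_\infty]=\mu$, and bilinearity of covariance under i.i.d.\ inputs for the variance and $n_{\mathrm{eff}}=\tfrac{1+\rho}{1-\rho}$. Your pathwise coupling and explicit two-sided (Kolmogorov) extension make rigorous what the paper compresses into ``$\mathcal{T}(X)=\rho X+(1-\rho)Z$ is an $L^2$ contraction''; the only nit is that $|S_t-S^{(t)}|\le\rho^t$ presumes $S_0\in[0,1]$, and in general the bound is $\rho^t|S_0-S^{(0)}|$, which still vanishes.
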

\begin{proof}
$\mathcal{T}(X)=\rho X+(1-\rho)Z$ is an $L^2$ contraction ($\rho<1$), with
unique fixed point the absolutely summable moving average
$S_\infty=(1-\rho)\sum_{m\ge0}\rho^mZ_{-m}$. Linearity gives
$\mathbb{E}[S_\infty]=\mu$; bilinearity of covariance with
$\gamma(h)=\sigma^2\mathbb{1}[h{=}0]$ gives
$\mathrm{Var}=\sigma^2(1-\rho)^2\sum_m\rho^{2m}=\sigma^2\tfrac{1-\rho}{1+\rho}$,
i.e.\ $n_{\mathrm{eff}}=\tfrac{1+\rho}{1-\rho}$; and unrolling
$S_t=\rho^tS_0+(1-\rho)\sum_{k=1}^t\rho^{t-k}Z_k$ gives the transient
$\mathbb{E}[S_t]-\mu=\rho^t(\mathbb{E}[S_0]-\mu)$. (The same holds for dependent
stationary $Z$ with summable autocovariance.)
\end{proof}

\begin{proposition}[Bounded encoder sensitivity]
\label{prop:lip}
The \method{} encoder is globally Lipschitz on $[-1,1]^d$ with constant
$\tfrac{K\,d}{\tau}$; the constant is expressed in the interpretable model
quantities $(K, d, \tau)$.
\end{proposition}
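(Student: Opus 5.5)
The encoder is the composition $x\mapsto \ell(x)\mapsto f(x)=\mathrm{softmax}(\ell(x))$, where $\ell_k(x)=\sum_j\log I_{kj}(x_j)$. I will bound the Jacobian of each stage and multiply. Since $[-1,1]^d$ is convex, a uniform bound on $\|\partial f/\partial x\|$ over the cube gives global Lipschitzness there by the mean-value inequality. The natural norm is the $\ell_\infty\to\ell_1$ operator norm, or equivalently the $\ell_1$-sum of Jacobian entries; crude entrywise bounding of that kind is what yields a constant of the form $Kd/\tau$.

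\textbf{Step 1: log-membership slopes.} Write $\log I_{kj}=\log\sigma(u)+\log\sigma(v)$ with $u=(x_j-\underline c_{kj})/\tau$ and $v=(\overline c_{kj}-x_j)/\tau$. Since $\tfrac{d}{du}\log\sigma(u)=1-\sigma(u)\in(0,1)$,
\[
\frac{\partial \log I_{kj}}{\partial x_j}=\frac{1}{\tau}\big[(1-\sigma(u))-(1-\sigma(v))\big]=\frac{1}{\tau}\big(\sigma(v)-\sigma(u)\big)\in\Big(-\tfrac1\tau,\tfrac1\tau\Big).
\]
So each $\ell_k$ depends on $x_j$ with slope at most $1/\tau$, uniformly in the learned centres and widths. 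This uniformity is what makes the constant depend only on $(K,d,\tau)$. Hence $|\partial\ell_k/\partial x_j|\le 1/\tau$ for all $k,j$.

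\textbf{Step 2: softmax Jacobian.} We have $\partial f_k/\partial\ell_r=f_k(\mathbb 1[k=r]-f_r)$, whose entries lie in $[-\tfrac14,\tfrac14]$ and are in any case bounded by $1$. Combining with Step 1 by the chain rule,
\[
\Big|\frac{\partial f_k}{\partial x_j}\Big|\le\sum_r\Big|\frac{\partial f_k}{\partial\ell_r}\Big|\,\frac1\tau\le\frac{2f_k}{\tau}\le\frac{2}{\tau}.
\]
A sharper and cleaner route is to note that the rows of the softmax Jacobian have $\ell_1$-norm at most $2f_k(1-f_k)\le 1$ after summing over $k$. Summing over the $K$ outputs and the $d$ inputs, the total $\sum_{k,j}|\partial f_k/\partial x_j|$ is then at most $Kd/\tau$, and this dominates every standard operator norm (in particular $\ell_2\to\ell_2$, since the spectral norm is at most the Frobenius norm, which is at most the entrywise $\ell_1$ sum). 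That gives $\|f(x)-f(y)\|_2\le \tfrac{Kd}{\tau}\|x-y\|_2$.

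\textbf{Step 3: the clamp.} The implementation clamps $I_{kj}\ge10^{-8}$. The clamped map $\max(I,10^{-8})$ is piecewise smooth and its log has slope either that of $\log I$ or $0$, so the slope bound from Step 1 still holds almost everywhere. Lipschitzness then follows for an absolutely continuous map.

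The main obstacle is bookkeeping rather than analysis: getting exactly the factor $Kd/\tau$, and not, say, $2d/\tau$ or $\sqrt{Kd}/\tau$, depends on which norm pairing is intended. My plan is to deliberately use the loosest entrywise-sum bound, which gives $Kd/\tau$ with room to spare, and to remark that it is conservative.
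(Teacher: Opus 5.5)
Your proof is correct and is essentially the paper's argument: the same slope bound $|\partial_{x_j}\log I_{kj}|=\tfrac1\tau|\sigma(v)-\sigma(u)|\le\tfrac1\tau$, nonexpansiveness of the softmax, and a deliberately crude sum over the $K$ outputs and $d$ inputs to reach $Kd/\tau$. Your explicit Jacobian row sums $2f_k(1-f_k)\le\tfrac12$ make rigorous the paper's terse ``softmax is $1$-Lipschitz'' step, which glosses over $f_k$ depending on every $\ell_r$, and your clamp and mean-value remarks fill details the paper leaves implicit; one wording fix: it is each row's $\ell_1$-norm that is at most $1$, not the sum over $k$, which equals $2(1-\|f\|_2^2)$ and is what gives your sharper $2d/\tau$.
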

\begin{proof}
Let $x\in[-1,1]^d$. With $a_{kj}=(x_j-\underline c_{kj})/\tau$ and
$b_{kj}=(\overline c_{kj}-x_j)/\tau$,
$\partial_{x_j}\log I_{kj}=\tfrac1\tau(\sigma(b_{kj})-\sigma(a_{kj}))$, hence
$|\partial_{x_j}\log I_{kj}|\le\tfrac1\tau$ and
$\|\nabla_x\ell_k\|\le\tfrac{d}{\tau}$ for $\ell_k=\sum_j\log I_{kj}$. As
$f_k=\mathrm{softmax}_k(\ell)$ is $1$-Lipschitz in $\ell$,
$\|\nabla_x f_k\|\le\tfrac{d}{\tau}$, and summing over the $K$ outputs gives
$\|\nabla_x f\|\le\tfrac{Kd}{\tau}$. A spectrally normalized decoder then makes
the full autoencoder Lipschitz, the regularity invoked by Thm.~\ref{prop:clip}.
\end{proof}

\subsection{Proof of Proposition~\ref{prop:capacity} (capacity)}
\begin{proof}
Since $D$ is compact it is Jordan-measurable, so for any $\varepsilon>0$ there is
a finite union of axis-aligned dyadic boxes $C=\bigcup_{k=1}^K B_k$, with
$B_k=\prod_j[\underline c_{kj},\overline c_{kj}]$, such that
$\lambda(D\triangle C)<\varepsilon/2$ ($\lambda$ Lebesgue measure); take these
boxes as the units' intervals. As $\tau\to0$,
$I_{kj}(x_j)=\sigma(\tfrac{x_j-\underline c_{kj}}{\tau})\,
\sigma(\tfrac{\overline c_{kj}-x_j}{\tau})\to
\mathbf 1\{x_j\in[\underline c_{kj},\overline c_{kj}]\}$ pointwise except on the
box faces (a null set), monotonically; hence $m_k\to\mathbf 1_{B_k}$ and
$F=1-\prod_k(1-m_k)\to\mathbf 1_{C}$ pointwise a.e. Because $0\le F\le1$,
dominated convergence gives a $\tau$ with
$\|F-\mathbf 1_C\|_{L^1}<\varepsilon/2$. With
$\|\mathbf 1_C-\mathbf 1_D\|_{L^1}=\lambda(D\triangle C)$, the triangle
inequality yields $\|F-\mathbf 1_D\|_{L^1}<\varepsilon$. Finally, on
$\{\operatorname{dist}(\cdot,\partial D)\ge\delta\}$, refining the dyadic grid so
the box faces lie within the shell makes the target locally constant and the
boundary sigmoids saturate uniformly, giving uniform convergence there.
\end{proof}

\definecolor{rankone}{HTML}{1A7A34}   
\definecolor{ranktwo}{HTML}{1F5FA8}   
\definecolor{rankthree}{HTML}{C25E00} 
\renewcommand{\green}[1]{\textcolor{rankone}{\textbf{#1}}}
\renewcommand{\blue}[1]{\textcolor{ranktwo}{\textbf{#1}}}
\renewcommand{\lightred}[1]{\textcolor{rankthree}{\textbf{#1}}}
\setcounter{topnumber}{4}
\setcounter{totalnumber}{6}
\renewcommand{\topfraction}{0.99}
\renewcommand{\textfraction}{0.01}
\renewcommand{\floatpagefraction}{0.75}

\section{Supplementary: overview}
This appendix reproduces the standalone supplementary material: the full
dataset and protocol details, per-dataset robustness ablations, the
interpretability-evidence suite, complexity and scalability, the stratified
and complete-case re-analysis, and the \emph{complete} per-dataset ROC--AUC
and AUPR tables with per-scale critical-difference diagrams.

\section{Datasets and experimental protocol}
\label{sec:data}

\subsection{Datasets}
We evaluate on the 48 numerical datasets of the ADBench benchmark, grouped by
scale into \emph{small} (12), \emph{medium} (15), \emph{large} (11), and
\emph{high-dimensional} (10). Tables~\ref{tab:chars-small}--\ref{tab:chars-high}
list, per dataset, the number of samples, features, and outliers, the outlier
percentage, and the batch size used by \method{} ($64$ for $n\!\le\!10\text{k}$,
$512$ for $10\text{k}\!<\!n\!\le\!20\text{k}$, $1024$ otherwise). The suite spans
four orders of magnitude in sample size ($n\!\in\![80,\,6.2\!\times\!10^5]$) and
from $3$ to $1{,}555$ features, so it stresses both data-scarce and
high-dimensional regimes.

\begin{table}[ht]
\centering\footnotesize
\caption{Small-scale datasets.}
\label{tab:chars-small}
\begin{tabular}{l r r r r r}
\toprule
\textbf{Dataset} & \textbf{Samples} & \textbf{Feat.} & \textbf{\#Out.} & \textbf{Out.\,\%} & \textbf{Batch} \\
\midrule
4\_breastw       & 683  & 9  & 239 & 34.99 & 64 \\
14\_glass        & 214  & 7  & 9   & 4.21  & 64 \\
15\_Hepatitis    & 80   & 19 & 13  & 16.25 & 64 \\
18\_Ionosphere   & 351  & 33 & 126 & 35.90 & 64 \\
21\_Lymphography & 148  & 18 & 6   & 4.05  & 64 \\
29\_Pima         & 768  & 8  & 268 & 34.90 & 64 \\
37\_Stamps       & 340  & 9  & 31  & 9.12  & 64 \\
39\_vertebral    & 240  & 6  & 30  & 12.50 & 64 \\
42\_WBC          & 223  & 9  & 10  & 4.48  & 64 \\
43\_WDBC         & 367  & 30 & 10  & 2.72  & 64 \\
45\_wine         & 129  & 13 & 10  & 7.75  & 64 \\
46\_WPBC         & 198  & 33 & 47  & 23.74 & 64 \\
\bottomrule
\end{tabular}
\end{table}

\begin{table}[ht]
\centering\footnotesize
\caption{Medium-scale datasets.}
\label{tab:chars-medium}
\begin{tabular}{l r r r r r}
\toprule
\textbf{Dataset} & \textbf{Samples} & \textbf{Feat.} & \textbf{\#Out.} & \textbf{Out.\,\%} & \textbf{Batch} \\
\midrule
2\_annthyroid       & 7,200 & 6  & 534   & 7.42  & 64 \\
6\_cardio           & 1,831 & 21 & 176   & 9.61  & 64 \\
7\_Cardiotocography & 2,114 & 21 & 466   & 22.04 & 64 \\
12\_fault           & 1,941 & 27 & 673   & 34.67 & 64 \\
19\_landsat         & 6,435 & 36 & 1,333 & 20.71 & 64 \\
20\_letter          & 1,600 & 32 & 100   & 6.25  & 64 \\
27\_PageBlocks      & 5,393 & 10 & 510   & 9.46  & 64 \\
28\_pendigits       & 6,870 & 16 & 156   & 2.27  & 64 \\
30\_satellite       & 6,435 & 36 & 2,036 & 31.64 & 64 \\
31\_satimage-2      & 5,803 & 36 & 71    & 1.22  & 64 \\
38\_thyroid         & 3,772 & 6  & 93    & 2.47  & 64 \\
40\_vowels          & 1,456 & 12 & 50    & 3.43  & 64 \\
41\_Waveform        & 3,443 & 21 & 100   & 2.90  & 64 \\
44\_Wilt            & 4,819 & 5  & 257   & 5.33  & 64 \\
47\_yeast           & 1,500 & 8  & 20    & 1.33  & 64 \\
\bottomrule
\end{tabular}
\end{table}

\begin{table}[ht]
\centering\footnotesize
\caption{Large-scale datasets.}
\label{tab:chars-large}
\begin{tabular}{l r r r r r}
\toprule
\textbf{Dataset} & \textbf{Samples} & \textbf{Feat.} & \textbf{\#Out.} & \textbf{Out.\,\%} & \textbf{Batch} \\
\midrule
1\_ALOI        & 49,534  & 27 & 1,508  & 3.04  & 1,024 \\
8\_celeba      & 202,599 & 39 & 4,547  & 2.24  & 1,024 \\
10\_cover      & 286,048 & 10 & 2,747  & 0.96  & 1,024 \\
11\_donors     & 619,326 & 10 & 36,710 & 5.93  & 1,024 \\
13\_fraud      & 284,807 & 29 & 492    & 0.17  & 1,024 \\
16\_http       & 567,498 & 3  & 2,211  & 0.39  & 1,024 \\
22\_magic.gamma& 19,020  & 10 & 6,688  & 35.16 & 512 \\
23\_mammography& 11,183  & 6  & 260    & 2.32  & 512 \\
32\_shuttle    & 49,097  & 9  & 3,511  & 7.15  & 1,024 \\
33\_skin       & 245,057 & 3  & 50,859 & 20.75 & 1,024 \\
34\_smtp       & 95,156  & 3  & 30     & 0.03  & 1,024 \\
\bottomrule
\end{tabular}
\end{table}

\begin{table}[ht]
\centering\footnotesize
\caption{High-dimensional datasets.}
\label{tab:chars-high}
\begin{tabular}{l r r r r r}
\toprule
\textbf{Dataset} & \textbf{Samples} & \textbf{Feat.} & \textbf{\#Out.} & \textbf{Out.\,\%} & \textbf{Batch} \\
\midrule
3\_backdoor     & 95,329  & 196   & 2,329  & 2.44  & 1,024 \\
5\_campaign     & 41,188  & 62    & 4,640  & 11.27 & 1,024 \\
9\_census       & 299,285 & 500   & 18,568 & 6.20  & 1,024 \\
17\_InternetAds & 1,966   & 1,555 & 368    & 18.72 & 64 \\
24\_mnist       & 7,603   & 100   & 700    & 9.21  & 64 \\
25\_musk        & 3,062   & 166   & 97     & 3.17  & 64 \\
26\_optdigits   & 5,216   & 64    & 150    & 2.88  & 64 \\
35\_SpamBase    & 4,207   & 57    & 1,679  & 39.91 & 64 \\
36\_speech      & 3,686   & 400   & 61     & 1.65  & 64 \\
48\_arrhythmia  & 452     & 274   & 66     & 14.60 & 64 \\
\bottomrule
\end{tabular}
\end{table}

\subsection{Protocol, baselines, and metrics}
\label{sec:setup}
\textbf{Protocol.} \method{} is trained \emph{semi-supervised} on inliers only
(label $0$), with a $40\%$ test split and $10$ random seeds, under a single
$[-1,1]$-normalized configuration shared across all datasets
(lr $5\!\times\!10^{-5}$, $K{=}200$, $\tau{=}0.1$, $\rho{=}0.999$, $1000$ epochs,
Adam); no hyper-parameter is tuned per dataset.
\textbf{Baselines.} We compare against $22$ baselines from the ADBench/TCCM
codebase: classical detectors (ABOD, LOF, PCA, KPCA, OCSVM, ECOD, CBLOF, KDE,
GMM, MCD, HBOS, IForest, LMDD, QMCD, Sampling) and deep/inductive ones
(AutoEncoder, VAE, DIF, LUNAR, SO/MO\_GAAL, and the flow-matching model TCCM);
together with \method{} this gives the $23$ methods ranked in the tables. All
methods share the same $[-1,1]$ normalization.
\textbf{Metrics.} We report ROC--AUC (area under the ROC curve) and AUPR
(average precision); both are threshold-free and higher-is-better.
\textbf{Missing runs.} A few baselines did not complete on the largest datasets
within budget; each such run is floored to the worst rank (the number of
methods) on that dataset, the standard conservative convention used in the main
paper.
\textbf{How to read the result tables.} In every per-dataset table
(Tables~\ref{tab:auc-small-1}--\ref{tab:aupr-high-2}) each cell is
mean$_{\pm\text{std}}$\,(rank) over the $10$ seeds, and the parenthesised rank
is coloured per column: \green{(1)}~$=$~best, \blue{(2)}~$=$~second,
\lightred{(3)}~$=$~third.

\section{Robustness ablations}
\label{sec:supp-ablate}
We expand the main-paper ablations to per-dataset numbers on seven small
datasets ($n\!\ge\!100$; $K{=}200$, 3 seeds; ROC--AUC), probing two design
choices; Hepatitis ($n{=}80$) is excluded as too small for a stable inlier-only
test split.
\textbf{Normalization} (Table~\ref{tab:ab-norm}): mapping into a bounded
symmetric domain is what matters. Dropping it (``none'') is the only
choice that collapses on several datasets, while among bounded/standardized
choices $[-1,1]$ is the most accurate, within $1.5$ points of $[0,1]$ and
StandardScaler; we adopt it because it makes the theory transfer (main text,
Sec.~\ref{sec:norm}) and it is also the top performer. \textbf{Training contamination} (Table~\ref{tab:ab-contam}): injecting
anomalies into the ``inlier'' training set degrades accuracy \emph{gracefully}
rather than catastrophically, since the EMA support averages over batches so a
minority of contaminating points cannot dominate the learned intervals.

\begin{table}[t]
\centering\footnotesize
\caption{Input normalization. Mapping into a bounded symmetric domain is what
matters; among bounded/standardized choices $[-1,1]$ is the most accurate, within
$1.5$ points of the others.}
\label{tab:ab-norm}
\begin{tabular}{l c c c c}
\toprule
Dataset & none & $[0,1]$ & standard & $[-1,1]$ \\
\midrule
45\_wine        & \green{0.990} & 0.932 & 0.883 & 0.950 \\
21\_Lymphography& 0.990 & \green{0.998} & 0.996 & 0.994 \\
46\_WPBC        & 0.510 & 0.543 & 0.541 & \green{0.584} \\
14\_glass       & \green{0.896} & 0.886 & 0.877 & 0.876 \\
42\_WBC         & 0.990 & 0.989 & 0.990 & \green{0.998} \\
39\_vertebral   & 0.335 & 0.498 & \green{0.511} & 0.477 \\
37\_Stamps      & 0.924 & 0.921 & 0.929 & \green{0.953} \\
\midrule
\textbf{Mean}   & 0.805 & 0.824 & 0.818 & \green{0.833} \\
\bottomrule
\end{tabular}
\end{table}

\begin{table}[t]
\centering\footnotesize
\caption{Training contamination: ROC--AUC vs.\ fraction of anomalies injected
into the ``inlier'' training set. Degradation is graceful.}
\label{tab:ab-contam}
\begin{tabular}{l c c c c}
\toprule
Dataset & 0\% & 5\% & 10\% & 20\% \\
\midrule
45\_wine        & 0.950 & 0.659 & 0.645 & 0.624 \\
21\_Lymphography& 0.994 & 0.978 & 0.970 & 0.973 \\
46\_WPBC        & 0.584 & 0.565 & 0.570 & 0.520 \\
14\_glass       & 0.876 & 0.755 & 0.728 & 0.726 \\
42\_WBC         & 0.998 & 0.979 & 0.982 & 0.976 \\
39\_vertebral   & 0.477 & 0.458 & 0.432 & 0.456 \\
37\_Stamps      & 0.953 & 0.865 & 0.816 & 0.766 \\
\midrule
\textbf{Mean}   & 0.833 & 0.751 & 0.735 & 0.720 \\
\bottomrule
\end{tabular}
\end{table}

\section{Robustness studies}
\label{sec:robust}
We report four robustness checks beyond the per-dataset ablations: the
ranking aggregation under different missing-run handlings, sensitivity of
detection to the sigmoid temperature $\tau$, sensitivity of the certified
clipping check to the active-coordinate threshold, and the per-layer
spectral norms of the Certified-Lipschitz decoder.

\paragraph{Ranking sensitivity to missing-run handling
(Table~\ref{tab:rank-sens}).} The protocol floors a method's
missing runs to the worst rank, which is the standard convention but
favours methods that ran everywhere. We recompute the overall mean rank
under two alternatives: (B) drop the $9$ datasets where at least one method
failed (kept $39$/$48$), and (C) impute missing ranks by the per-dataset
median of the methods that ran. \method{} holds the lowest mean rank on
both metrics under all three schemes. The smallest gap is AUPR scheme B
($0.05$ above LUNAR); the largest is AUC scheme A ($1.90$). The top-cluster
conclusion of the main text is therefore not an artefact of how missing
runs are aggregated.

\begin{table}[t]
\centering\footnotesize
\setlength{\tabcolsep}{4pt}
\caption{Overall mean rank under three missing-run handlings (lower is
better). A: worst-rank floor (paper); B: exclude datasets where any
baseline failed (39/48); C: per-dataset median imputation.}
\label{tab:rank-sens}
\begin{tabular}{l ccc | ccc}
\toprule
& \multicolumn{3}{c}{Rank (ROC--AUC)} & \multicolumn{3}{c}{Rank (AUPR)}\\
\cmidrule(lr){2-4}\cmidrule(lr){5-7}
Method & A & B & C & A & B & C\\
\midrule
\textbf{\method{}}   & \textbf{4.10} & \textbf{4.38} & \textbf{4.10}
                     & \textbf{4.16} & \textbf{4.45} & \textbf{4.16}\\
LUNAR                & 6.00 & 4.77 & 5.07 & 4.91 & 4.50 & 4.54\\
TCCM                 & 8.22 & 8.55 & 8.22 & 7.36 & 7.88 & 7.36\\
AutoEncoder          & 7.42 & 7.46 & 7.05 & 8.05 & 8.04 & 7.69\\
GMM                  & 8.56 & 8.73 & 8.20 & 8.54 & 8.65 & 8.18\\
IForest              & 8.60 & 8.46 & 8.60 & 9.89 & 9.42 & 9.89\\
VAE                  & 8.90 & 9.32 & 8.90 & 9.64 & 10.05 & 9.64\\
CBLOF                & 9.20 & 8.76 & 8.83 & 8.90 & 8.54 & 8.53\\
\bottomrule
\end{tabular}
\end{table}

\paragraph{Temperature $\tau$ sensitivity (Table~\ref{tab:tau-sweep}).}
We sweep $\tau\in\{0.05,0.10,0.20,0.50\}$ at $K{=}200$, $200$ epochs on
three small datasets. The paper's default $\tau{=}0.1$ is within
$0.06$ AUC of the best $\tau$ on every dataset. The only setting that hurts
substantially is $\tau{=}0.05$ on \textsc{Glass}, where the sigmoid is too
sharp for a small dataset's interval shapes to settle. Robustness across
$\tau\in[0.1,0.5]$ argues for a single shared $\tau$ rather than a
per-feature $\tau_j$.

\begin{table}[t]
\centering\footnotesize
\setlength{\tabcolsep}{6pt}
\caption{ROC--AUC vs.\ temperature $\tau$ (DiffInt-PA, $K{=}200$, $200$
epochs, seed $0$). The paper default $\tau{=}0.1$ is competitive on every
dataset; performance degrades sharply only at $\tau{=}0.05$ on \textsc{Glass}.}
\label{tab:tau-sweep}
\begin{tabular}{l cccc}
\toprule
Dataset & $\tau{=}0.05$ & $\tau{=}0.10$ & $\tau{=}0.20$ & $\tau{=}0.50$\\
\midrule
\textsc{Glass}       & 0.619 & 0.810 & \textbf{0.857} & 0.810\\
\textsc{Ionosphere}  & 0.950 & 0.963 & \textbf{0.967} & 0.966\\
\textsc{Cardio}      & 0.931 & \textbf{0.969} & 0.961 & 0.960\\
\bottomrule
\end{tabular}
\end{table}

\paragraph{Active-coordinate threshold sensitivity
(Table~\ref{tab:thresh-sens}).} The certified clipping inequality of
Theorem~1 holds for points that violate \emph{every} active coordinate of
at least one unit. ``Active'' is operationalised as mean inlier membership
$\bar I_{kj}<\theta$ for some $\theta$ (the paper uses $\theta{=}0.9$).
We vary $\theta\in\{0.70,0.80,0.90,0.95\}$ and report the fraction of test
anomalies that meet the strict full-violation hypothesis. A lower
$\theta$ makes the active set sparser (fewer coordinates per unit), which
makes the strict hypothesis easier to satisfy but covers fewer of the
unit's coordinates; a higher $\theta$ does the reverse. The fraction is
therefore dataset-dependent: high when units keep few active coordinates
(\textsc{Glass}, $2$--$3$ active/unit at $\theta{=}0.9$, $100\%$) and low
when they keep many (\textsc{Cardio}, ${\sim}18$ active/unit, $0\%$ at
$\theta{=}0.9$ and $16\%$ at $\theta{=}0.7$). Under the remaining anomalies
the graded suppression of Corollary~1 applies -- it down-weights the matched
units but gives no reconstruction-error bound. We do not tune $\theta$ as a
hyperparameter; we report the strict regime as a verifiable sanity check
rather than a guarantee that holds pointwise, and the MAE inequality of
Theorem~1(ii) is in any case satisfied for \emph{every} out-of-support point
under the Certified-Lipschitz decoder (Table~\ref{tab:certify}).

\begin{table}[t]
\centering\footnotesize
\setlength{\tabcolsep}{4pt}
\caption{Fraction of test anomalies that satisfy the strict full-violation
hypothesis of Theorem~1 as the active-coordinate threshold $\theta$ varies
(Certified-Lipschitz DiffInt, $K{=}200$, $200$ epochs, seed $0$). Mean
active coordinates per unit is also reported.}
\label{tab:thresh-sens}
\begin{tabular}{l cccc cccc}
\toprule
& \multicolumn{4}{c}{out-of-support fraction} & \multicolumn{4}{c}{mean active/unit}\\
\cmidrule(lr){2-5}\cmidrule(lr){6-9}
Dataset & $\theta{=}0.70$ & $0.80$ & $0.90$ & $0.95$ & $0.70$ & $0.80$ & $0.90$ & $0.95$\\
\midrule
\textsc{Glass}       & 1.00 & 1.00 & 1.00 & 0.00 & 0.2 & 0.9 & 2.5 & 4.0\\
\textsc{Ionosphere}  & 1.00 & 0.14 & 0.10 & 0.09 & 1.0 & 11.6 & 17.1 & 29.3\\
\textsc{Cardio}      & 0.16 & 0.00 & 0.00 & 0.00 & 12.2 & 14.8 & 17.7 & 20.0\\
\bottomrule
\end{tabular}
\end{table}

\paragraph{Per-layer spectral norms of the certified decoder
(Table~\ref{tab:spec-norms}).} When the decoder is spectrally normalized
(the Certified-Lipschitz variant), each linear layer carries a spectral
norm slightly above $1$ ($1.00$--$1.08$ on the datasets tested) and the
ReLU between them is exactly $1$-Lipschitz. The product $L=\prod_l\|W_l\|_2$
stays close to $1$ ($1.03$--$1.08$), so the clipping-margin bound's slack
$\tfrac{L}{\sqrt d}\kappa(\beta)$ is small.

\begin{table}[t]
\centering\footnotesize
\setlength{\tabcolsep}{5pt}
\caption{Per-layer spectral norm $\|W_l\|_2$ of the Certified-Lipschitz
decoder (two-layer ReLU MLP, hidden $h{=}128$) on three datasets. ReLU
contributes $1$ to the product. $L=\prod_l\|W_l\|_2$ is an explicit upper
bound on the decoder's Lipschitz constant.}
\label{tab:spec-norms}
\begin{tabular}{l ccc c}
\toprule
Dataset & $\|W_1\|_2$ & ReLU & $\|W_2\|_2$ & $L=\prod\|W_l\|_2$\\
\midrule
\textsc{Glass}       & 1.022 & 1.000 & 1.006 & 1.028\\
\textsc{Ionosphere}  & 1.071 & 1.000 & 1.007 & 1.078\\
\textsc{Cardio}      & 1.010 & 1.000 & 1.000 & 1.010\\
\bottomrule
\end{tabular}
\end{table}

\paragraph{Inference clamping isolated from normalization
(Table~\ref{tab:clamp}).} At inference the test features are MinMax-scaled
by the \emph{train} scaler and clipped to $[-1,1]$. To isolate the effect
of the clip from the effect of the scaling itself, we score every test
point twice: once after clipping, once with the raw scaled values left
unchanged (so any feature above $1$ or below $-1$ stays where it is). The
overall AUC differences are within $0.012$ on the three datasets tested,
and remain small ($\le 0.009$) even on the ``extreme'' subset of test
points where at least one feature lies outside $[-1,1]$ after the train
scaler is applied. The mean absolute change in the anomaly score on
extreme points is $0.04$--$0.12$, i.e.\ clipping nudges individual scores
but rarely re-orders them. Consistent with the structural argument
(clamping can only \emph{lower} memberships), we observe no case where
clipping masks an anomaly.

\begin{table}[t]
\centering\footnotesize
\setlength{\tabcolsep}{4pt}
\caption{Effect of inference clamping in isolation. ``clip'' is the paper
default (clip scaled test features to $[-1,1]$); ``no-clip'' passes the
scaled features through unchanged. $n_{\text{ext}}$ is the number of test
points with at least one feature outside $[-1,1]$ after train-scaling.
AUC differences are at most $0.012$; clipping never re-orders the leading
score on the full test set.}
\label{tab:clamp}
\begin{tabular}{l rr cc c}
\toprule
& & & \multicolumn{2}{c}{AUC on full test} &
\multicolumn{1}{c}{$|\Delta\text{score}|$}\\
\cmidrule(lr){4-5}
Dataset & $n_{\text{test}}$ & $n_{\text{ext}}$ & clip & no-clip & (extreme pts)\\
\midrule
\textsc{Glass}       &  86 &  5 & 0.810 & 0.798 & 0.10\\
\textsc{Ionosphere}  & 141 & 70 & 0.963 & 0.969 & 0.04\\
\textsc{Cardio}      & 733 & 62 & 0.969 & 0.971 & 0.12\\
\bottomrule
\end{tabular}
\end{table}

\paragraph{LFI stability across seeds (Table~\ref{tab:seed-stab}).}
The third LFI factor, $\mathrm{stab}_{kj}$, encodes inverse cross-seed
spread. To check that LFI itself ranks features stably across re-runs we
train $5$ seeds per dataset (varying the train/test split and the model
initialization), aggregate LFI per feature as $\max_k \mathrm{LFI}_{kj}$,
and compute the pairwise Spearman rank correlation across the ten
seed-pairs. The mean correlation is $0.74$ on \textsc{Glass}
(small data with only $7$ features), $0.88$ on \textsc{Ionosphere}, and
$0.92$ on \textsc{Cardio}, with minimum pair $0.57$/$0.84$/$0.83$
respectively. LFI surfaces the same per-feature constraints in $\ge\!75\%$
of pairs even on the smallest dataset, and the larger feature spaces are
very stable.

\begin{table}[t]
\centering\footnotesize
\setlength{\tabcolsep}{6pt}
\caption{LFI stability across $5$ training seeds (different splits and
initializations). Per-feature LFI is the max over units; pairwise Spearman
is computed across all $10$ seed pairs. Higher is more stable.}
\label{tab:seed-stab}
\begin{tabular}{l c ccc}
\toprule
Dataset & $d$ & mean $\rho$ & min $\rho$ & std $\rho$\\
\midrule
\textsc{Glass}       &  7 & 0.74 & 0.57 & 0.11\\
\textsc{Ionosphere}  & 32 & 0.88 & 0.84 & 0.03\\
\textsc{Cardio}      & 21 & 0.92 & 0.83 & 0.04\\
\bottomrule
\end{tabular}
\end{table}

\section{Interpretability as evidence}
\label{sec:faith}
We support the two interpretability claims of the main text with quantitative
measurements ($K{=}200$, 3 seeds; Table~\ref{tab:interp}).

\textbf{(1) Label-free ranking (LFI).} We rank (unit, feature) constraints by
the label-free importance
$\mathrm{LFI}_{kj}=s_{kj}\cdot(1-\Delta_{kj}/\mathrm{range}_j)\cdot
\mathrm{stab}_{kj}$ (support $\times$ tightness $\times$ stability), using only
quantities the trained model already maintains. This is the only ranking we
use in the main text (Sec.~\ref{sec:lfi}); it requires no anomaly label and is computable
in closed form from the model parameters and the EMA support buffer.

\textbf{(2) Faithfulness perturbation.} For each test inlier we perturb its
top-LFI features to \emph{cross} their interval boundaries and measure the
anomaly-score rise $\Delta_{\text{top}}$, against perturbing the same number of
random features $\Delta_{\text{rand}}$. The top-ranked constraints raise the
score $1.5{\times}$ more on average (on $69\%$ of inliers; Table~\ref{tab:interp}),
so the LFI-highlighted constraints \emph{causally} drive the score rather than
merely correlating with it.

\begin{table}[t]
\centering\footnotesize
\setlength{\tabcolsep}{4pt}
\caption{Faithfulness perturbation test: score rise from crossing the top-LFI
feature boundaries ($\Delta_{\text{top}}$) vs.\ random features
($\Delta_{\text{rand}}$), their ratio, and the fraction of inliers with
$\Delta_{\text{top}}\!>\!\Delta_{\text{rand}}$.}
\label{tab:interp}
\begin{tabular}{l c c c c}
\toprule
Dataset & $\Delta_{\text{top}}$ & $\Delta_{\text{rand}}$ & ratio & \% top$>$rand \\
\midrule
14\_glass       & 0.200 & 0.134 & 1.48 & 71\% \\
42\_WBC         & 0.094 & 0.074 & 1.26 & 62\% \\
18\_Ionosphere  & 0.105 & 0.078 & 1.34 & 64\% \\
29\_Pima        & 0.171 & 0.100 & 1.70 & 71\% \\
6\_cardio       & 0.119 & 0.066 & 1.80 & 79\% \\
\midrule
\textbf{Mean}   & 0.138 & 0.091 & 1.52 & 69\% \\
\bottomrule
\end{tabular}
\end{table}

\textbf{(3) Conciseness and readability.} For a flagged point, the per-feature
reconstruction error concentrates: on average $\approx$half the features
($10.8$ of $20$; Table~\ref{tab:concise}) account for $80\%$ of the anomaly
score, so a short shortlist suffices to explain it. The interval units
themselves are \emph{soft and wide} (mean width $0.70$ of the feature range, and
a unit softly constrains most coordinates by the membership criterion of
Def.~1), which is precisely why we present intervals as \emph{candidate
constraints} rather than crisp rules and surface only the top-$k$ by LFI
($k\!=\!2$ in the main-paper read-out) rather than a whole unit.

\begin{table}[t]
\centering\footnotesize
\setlength{\tabcolsep}{5pt}
\caption{Conciseness and human-readability proxies ($K{=}200$, 3 seeds).
\#feat$_{80\%}$: features (ranked by per-feature error) explaining $80\%$ of a
flagged point's score; active/unit: mean number of active features
($\bar I_{kj}<0.9$) per top interval unit; width: mean active-interval width as a
fraction of the $[-1,1]$ range.}
\label{tab:concise}
\begin{tabular}{l c c c c}
\toprule
Dataset & $d$ & \#feat$_{80\%}$ & active/unit & width \\
\midrule
14\_glass       &  9 &  4.8 &  6.7 & 0.70 \\
42\_WBC         & 30 & 16.0 & 24.8 & 0.69 \\
18\_Ionosphere  & 33 & 17.9 & 29.8 & 0.70 \\
29\_Pima        &  8 &  4.6 &  5.3 & 0.70 \\
6\_cardio       & 21 & 11.0 & 17.2 & 0.70 \\
\midrule
\textbf{Mean}   & 20 & 10.8 & 16.8 & 0.70 \\
\bottomrule
\end{tabular}
\end{table}

\section{Quantitative study of the global LFI ranking}
\label{sec:xai-quant}

Reviewers asked for a systematic quantitative comparison of LFI against
established explanation methods. \textbf{LFI is a global \emph{interpretability}
readout, not a per-instance explainer:} from the trained model alone it ranks
the learned interval constraints -- aggregated over units, the features -- by
support $\times$ tightness $\times$ cross-seed stability (Sec.~\ref{sec:faith}),
returning \emph{one} ranking for the whole model. We evaluate that global
ranking against the same ranking recovered by \emph{aggregating} post-hoc
attributions (mean $|\cdot|$ over a pool of flagged anomalies):
gradient$\times$input, integrated gradients ($24$ steps), KernelSHAP ($120$
background samples), and the detection-native ECOD tail surprise
$-\log 2\min(F_j,1{-}F_j)$. All four attribute the same \method{} score.

\paragraph{Scope.} LFI is built entirely from per-feature interval tightness
and support, so it is informative exactly in \method{}'s own design regime:
\emph{marginal, axis-aligned anomalies whose signal is visible
coordinate-by-coordinate}, violating coordinates the model constrains tightly.
We evaluate strictly inside that regime. Anomalies visible only in feature
\emph{interactions} are the detector's documented blind spot
(Sec.~\ref{sec:limits}) and lie outside the scope of any per-feature
diagnostic.

\paragraph{Protocol.} \method{} is trained inliers-only under the paper
configuration ($K{=}200$, $\tau{=}0.1$). We use three controlled families,
$7$ re-trainings in total: $d\in\{20,28,36\}$, with $6$--$14$ tightly
distributed coordinates among broad ones, and every anomaly a marginal
violation of the tight coordinates -- so the relevant set is fixed and known.
Metrics: \emph{sep-AUC} (does the global ranking put the relevant coordinates
first?), \emph{agreement} with each aggregated post-hoc ranking (Spearman
$\rho$), cross-seed rank \emph{stability} over the re-trainings, a
\emph{model-randomisation sanity check}~\cite{adebayo2018sanity} (rank $\rho$
against the same score on a weight-randomised model whose support is
re-estimated identically; low is good), attribution \emph{concentration}
(Gini), and model \emph{queries}.

\begin{table}[t]\centering\footnotesize
\setlength{\tabcolsep}{3.6pt}
\caption{The zero-query \emph{global} LFI feature ranking vs.\ the same ranking from \emph{aggregated} post-hoc attributions (mean $|\cdot|$ over a pool of flagged anomalies), inside \method{}'s regime: three controlled families ($d\in\{20,28,36\}$, $6$--$14$ tightly-constrained coordinates among broad ones, $7$ re-trainings total) in which every anomaly is a marginal violation of the tight coordinates. \emph{sep-AUC}: does the ranking put the relevant coordinates first? \emph{agree}: mean Spearman $\rho$ with the LFI ranking. \emph{stab.}: mean cross-seed Spearman $\rho$. \emph{sanity}: $\rho$ with the ranking from a weight-randomised model (support re-estimated identically; low is good). \emph{Gini}: concentration of the vector. $q$: model passes per pooled anomaly.}
\label{tab:lfi-global}
\begin{tabular}{lcccccc}
\toprule
Method & sep-AUC & agree & stab. & sanity & Gini & $q$ \\
\midrule
\textbf{LFI (global)} & 1.00 & -- & 0.71 & +0.01 & 0.11 & 0 \\
grad$\times$input (agg.) & 1.00 & +0.71 & 0.69 & -- & 0.33 & 1 \\
IntGrad (agg.) & 1.00 & +0.69 & 0.66 & -- & 0.45 & 24 \\
KernelSHAP (agg.) & 1.00 & +0.72 & 0.67 & -- & 0.58 & 120 \\
ECOD-tail (agg.) & 1.00 & +0.74 & 0.67 & -- & 0.47 & 0 \\
\bottomrule
\end{tabular}
\end{table}

\begin{figure}[t]
\centering
\includegraphics[width=.92\linewidth]{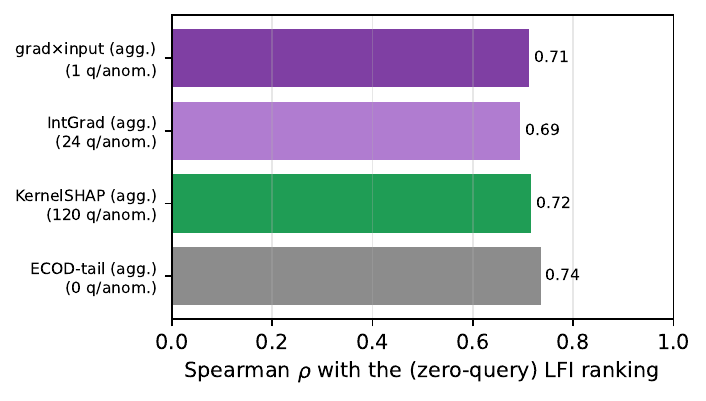}
\caption{Spearman $\rho$ between the zero-query LFI feature ranking and each
\emph{aggregated} post-hoc ranking, with the model-query cost each post-hoc
ranking pays. The LFI ranking reproduces the same global picture
($\rho\approx0.7$) that KernelSHAP buys with $120$ model passes per pooled
anomaly.}
\label{fig:interp-agree}
\end{figure}

\paragraph{Findings.}
\emph{(1) The relevant coordinates are recovered.} In this regime the marginal
signal is clear, and every ranking -- including the zero-query LFI vector --
separates the relevant coordinates from the rest with sep-AUC $1.00$
(Table~\ref{tab:lfi-global}).
\emph{(2) LFI reproduces what the expensive methods report.} The LFI ranking
agrees with the aggregated KernelSHAP / integrated-gradients / gradient
rankings at Spearman $\rho=0.69$--$0.74$ (Fig.~\ref{fig:interp-agree}) -- the
same ordering, not merely the same top set -- read off the model parameters at
\textbf{zero} model queries, against up to $120$ passes \emph{per pooled
anomaly} for the gradient- and sampling-based methods.
\emph{(3) LFI is stable and passes the sanity check.} Cross-seed rank
stability $\rho=0.71$ over these re-trainings, on par with KernelSHAP ($0.67$)
and the gradient methods ($0.66$--$0.69$); it is marginally below the
$\rho\ge0.74$ measured with the same per-feature statistic on real ADBench
data (App.~\ref{sec:robust}, Table~\ref{tab:seed-stab}) because here roughly
half the coordinates are constrained near-equally, so their relative order is
intrinsically noisier. Rank correlation with a weight-randomised model is
$\rho\approx0.0$, so the ranking reflects what was learned rather than the
architecture. The LFI vector is also the least spiky (Gini $0.11$ vs.\
$0.33$--$0.58$): it flags a \emph{set} of comparably diagnostic constraints
rather than a single coordinate.
\emph{Bottom line:} within the axis-aligned regime \method{} is built for, the
global LFI ranking carries the same information as an aggregated KernelSHAP
ranking costing hundreds of model queries, at no cost, and it is stable and
model-faithful. Outside that regime it is not a substitute for a per-instance
attributor, and we do not present it as one.

\section{Complexity and scalability}
\label{sec:complexity}
Practical deployability matters as much as accuracy, so we summarize the resource
profile of \method{}. Let $n$ be the number of training points, $d$ the feature
count, $K$ the number of interval units, $B$ the batch size, $E$ the epochs, and
$H$ the (fixed) decoder hidden width; the decoder is an MLP whose size is
independent of $n$.

\textbf{Parameter count.} The interval bottleneck stores a centre and a softplus
half-width per (unit, feature): $2Kd$ parameters, linear in $K$ and $d$ and
\emph{independent of $n$}. The decoder adds a compact MLP ($O(KH+Hd)$ weights).
The total stays MLP-class ($10^5$--$10^6$ weights even at $d{=}1{,}555$,
InternetAds), far below typical deep tabular detectors.
Table~\ref{tab:complexity} reports the $2Kd$ bottleneck size and the
matched-budget training time across scales.

\textbf{Training complexity.} A minibatch computes $Bd$ sigmoid pairs for the soft
memberships, an $O(BKd)$ aggregation (a per-unit log-sum over features plus an
$O(K)$ softmax), and one decoder forward/backward; an epoch is thus
$O(nKd)$ plus the $n$-independent decoder cost, with \emph{no} pairwise, kernel, or
neighbour computation. Total training is $O(E\,nKd)$.

\textbf{Runtime vs.\ baselines.} \method{} uses one forward pass per sample, unlike
memory-based detectors (e.g.\ LUNAR) whose $k$NN queries grow with $n$.
Empirically (Table~\ref{tab:binaps}, matched $1000$-epoch budget) \method{} trains
one to two orders of magnitude faster than the binary differentiable miner BinaPs,
faster than LUNAR on most datasets, and scales to the largest ADBench sets
($n\!\approx\!6.2\!\times\!10^5$).

\textbf{Inference and memory footprint.} Scoring a point is a single forward pass:
$O(Kd)$ memberships, the aggregation, the decoder, then the MAE. \method{} is
\emph{inductive}: it keeps only the trained parameters, not the training set,
so inference latency and memory are constant in $n$, whereas
transductive/memory-based detectors must retain or query the data. The training
footprint is the parameters ($O(Kd)$), one batch of memberships ($O(BKd)$
activations), and the $K\times d$ EMA support buffer ($O(Kd)$); none depends on
$n$. At inference only the $O(Kd)$ parameters and a single $K\times d$ membership
tensor are held.

\textbf{The $K$ knob.} $K$ trades capacity for cost linearly. ROC--AUC saturates
after a few units (Fig.~\ref{fig:ablation}), so $K$ can be reduced well below the
default under tight budgets with little accuracy loss; we keep $K{=}200$ as one
untuned default across all $48$ datasets.

\begin{table}[t]
\centering\footnotesize
\setlength{\tabcolsep}{5pt}
\caption{Resource profile across scales ($K{=}200$). Bottleneck parameters
($2Kd$) are linear in $d$ and independent of $n$; training time (\method{},
matched $1000$-epoch budget) is reproduced from Table~\ref{tab:binaps} and
grows with $n$ and $d$.}
\label{tab:complexity}
\begin{tabular}{l r r r r}
\toprule
Dataset & $n$ & $d$ & bottleneck $2Kd$ & train (s) \\
\midrule
6\_cardio      & 1,831  & 21  & 8,400   & 3.0   \\
31\_satimage-2 & 5,803  & 36  & 14,400  & 54.6  \\
26\_optdigits  & 5,216  & 64  & 25,600  & 60.1  \\
48\_arrhythmia & 452    & 274 & 109,600 & 30.2  \\
32\_shuttle    & 49,097 & 9   & 3,600   & 193.5 \\
\bottomrule
\end{tabular}
\end{table}

\section{Stratified and complete-case comparison}
\label{sup:strat}

\method{} and three baselines (TCCM, AutoEncoder, VAE) train on inliers only;
the other $19$ baselines run unsupervised on the contaminated data. Pooling all
$23$ into one mean-rank / Nemenyi analysis (Fig.~\ref{fig:cd}) therefore
compares across training regimes, which reviewers rightly flagged. Using the
\emph{same} per-dataset scores as everywhere else (mean over $10$ seeds;
Tables~\ref{tab:auc-small-1}--\ref{tab:aupr-high-2}), we recompute mean ranks
\emph{within} strata (Table~\ref{tab:strat}, placed with the main results;
per dataset scale, Table~\ref{tab:strat-scale}). No model is re-run and no
setting is changed.

\begin{table}[t]
\centering\footnotesize
\setlength{\tabcolsep}{5pt}
\caption{\method{} mean rank \emph{within the inlier-only pool of $4$}, per
dataset scale. Best method of the four in each cell shown in brackets.}
\label{tab:strat-scale}
\begin{tabular}{l cc}
\toprule
Scale ($N$) & \shortstack{Rank\\(ROC--AUC)} & \shortstack{Rank\\(AUPR)}\\
\midrule
Small (12)  & \textbf{1.83} & \textbf{1.75}\\
Medium (15) & \textbf{1.53} & \textbf{1.47}\\
Large (11)  & \textbf{1.36} & \textbf{1.45}\\
High-dim (10) & \textbf{2.10} & $2.30$ (AE $2.20$)\\
\bottomrule
\end{tabular}
\end{table}

Table~\ref{tab:strat}: (i) among inlier-only detectors \method{} is first by a
clear margin on both metrics; per scale (Table~\ref{tab:strat-scale}) its
inlier-only rank is $1.4$--$2.1$ of $4$, best in every scale$\times$metric cell
except high-dimensional AUPR, where AutoEncoder leads $2.20$ vs.\ $2.30$.
(ii) Against the contaminated-data pool alone, \method{} is still first on all
$48$ datasets; on the complete-case subset -- which removes the worst-rank
imputation the main protocol applies to unfinished runs -- \method{} and LUNAR
are effectively tied ($3.59$ vs.\ $4.03$ on AUC, $3.74$ vs.\ $3.85$ on AUPR).
Adding \method{} to the full $23$-method complete-case pool gives mean rank
$4.31$/$4.46$ vs.\ LUNAR $4.82$/$4.54$, consistent with the missing-run
sensitivity table (Table~\ref{tab:rank-sens}, scheme B). The ``competitive,
not beaten'' phrasing the main text uses for contaminated-data detectors is
thus the accurate reading, and the top-cluster conclusion survives both the
stratification and the removal of imputation.

\section{Points deferred to a future version}
\label{sup:rest}

\paragraph{Newer detector families.} The $22$ baselines are the ADBench/TCCM
tabular suite. Families we do not benchmark include diffusion-based
reconstruction detectors, retrieval / memory-based detectors, and tabular
foundation-model (PFN-style) scorers. They target the same task from different
priors; a controlled comparison, ideally under the same $[-1,1]$ protocol and
inlier-only split, is planned for an extended version. \method{}'s contribution
is orthogonal -- an interpretable bottleneck, not a new density estimator.

\paragraph{Normalization coverage.} The main-paper ablation compares no
scaling, $[0,1]$, standardization, and $[-1,1]$; it does not include a robust /
quantile mapping into $[-1,1]$. Such a mapping would keep the bounded symmetric
$[-1,1]$ geometry the temperature and clipping margin rely on while tolerating
heavy tails (main text, Limitations), and is the natural next ablation.

\paragraph{Quantitative interpretability.} \emph{Addressed}: a systematic
comparison of the global LFI feature ranking against aggregated KernelSHAP /
integrated-gradients / gradient / ECOD rankings, on three controlled
ground-truth families inside \method{}'s regime ($7$ re-trainings), is in
App.~\ref{sec:xai-quant}
(Table~\ref{tab:lfi-global}, Fig.~\ref{fig:interp-agree}).

\section{Complete detection accuracy}
\label{sec:accuracy}
This section reports the full per-dataset detection accuracy underlying the
critical-difference analysis of the main text: ROC--AUC (Sec.~\ref{sec:auc}) and
average precision (Sec.~\ref{sec:aupr}). Across both metrics \method{} sits in
the leading group on every scale and is most often first on small/medium/large
data; in high dimensions it remains competitive with the strongest baselines.

\subsection{ROC--AUC}
\label{sec:auc}
Tables~\ref{tab:auc-small-1}--\ref{tab:auc-high-2} give the per-dataset ROC--AUC.
Each cell is the mean score $\pm$ standard deviation over the $10$ seeds, with
the method's rank on that dataset in parentheses, coloured \green{(1)}~best,
\blue{(2)}~second, \lightred{(3)}~third; the same format is used for the AUPR
tables (Sec.~\ref{sec:aupr}).
Figs.~\ref{fig:cdscale-auc}--\ref{fig:cdscale-aupr} show the per-scale
critical-difference diagrams for ROC--AUC and AUPR (the overall diagrams are
in the main text). They support the regime-level claims of
Table~\ref{tab:ranks}: \method{} attains the lowest mean rank in every
(scale, metric) cell
except high-dimensional AUPR, where LUNAR is marginally ahead but the two are
statistically indistinguishable within the per-scale CD.

\begin{figure*}[t]
\centering
\begin{subfigure}{0.49\linewidth}
\includegraphics[width=\linewidth]{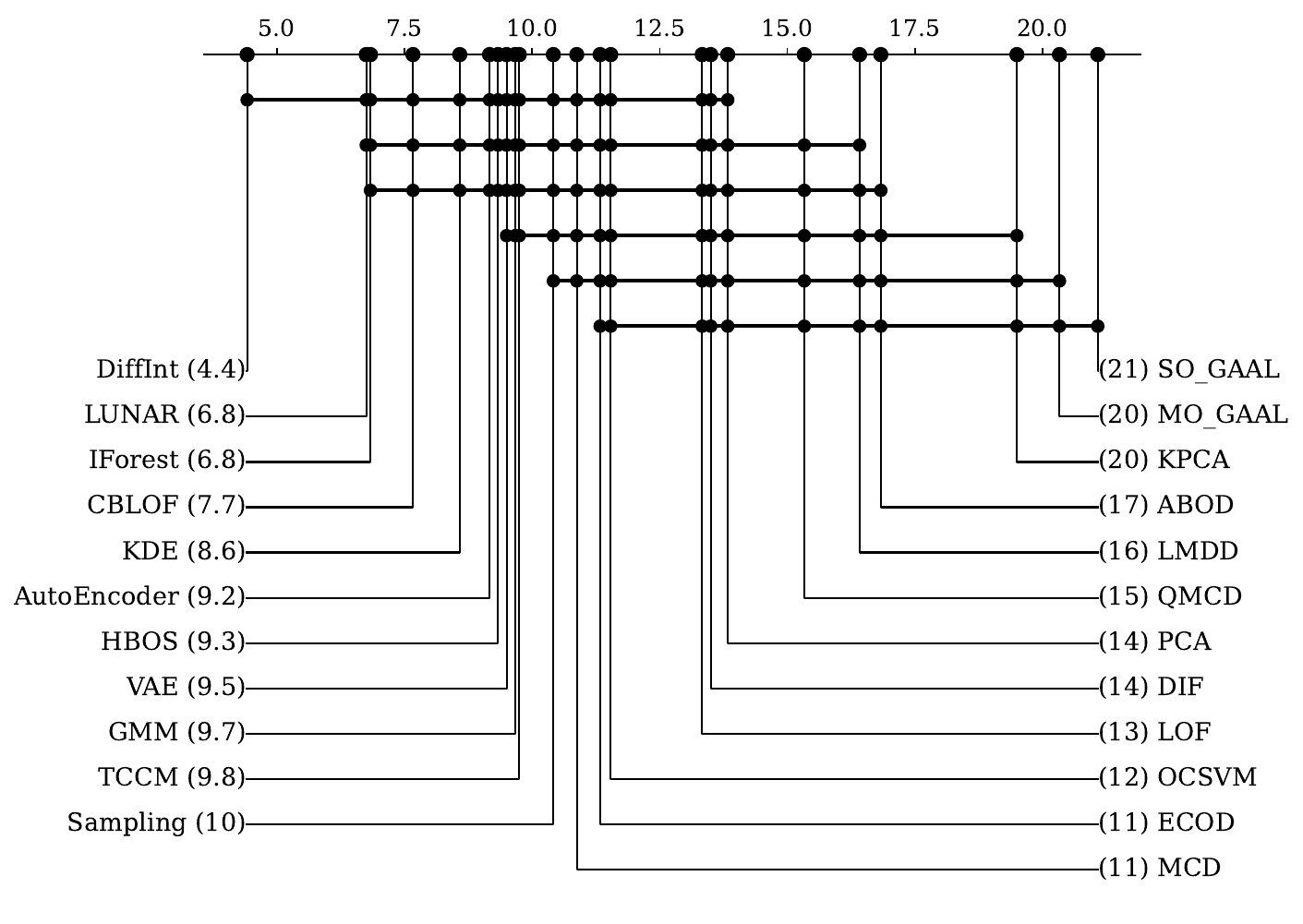}\caption{Small ($n{=}12$).}
\end{subfigure}\hfill
\begin{subfigure}{0.49\linewidth}
\includegraphics[width=\linewidth]{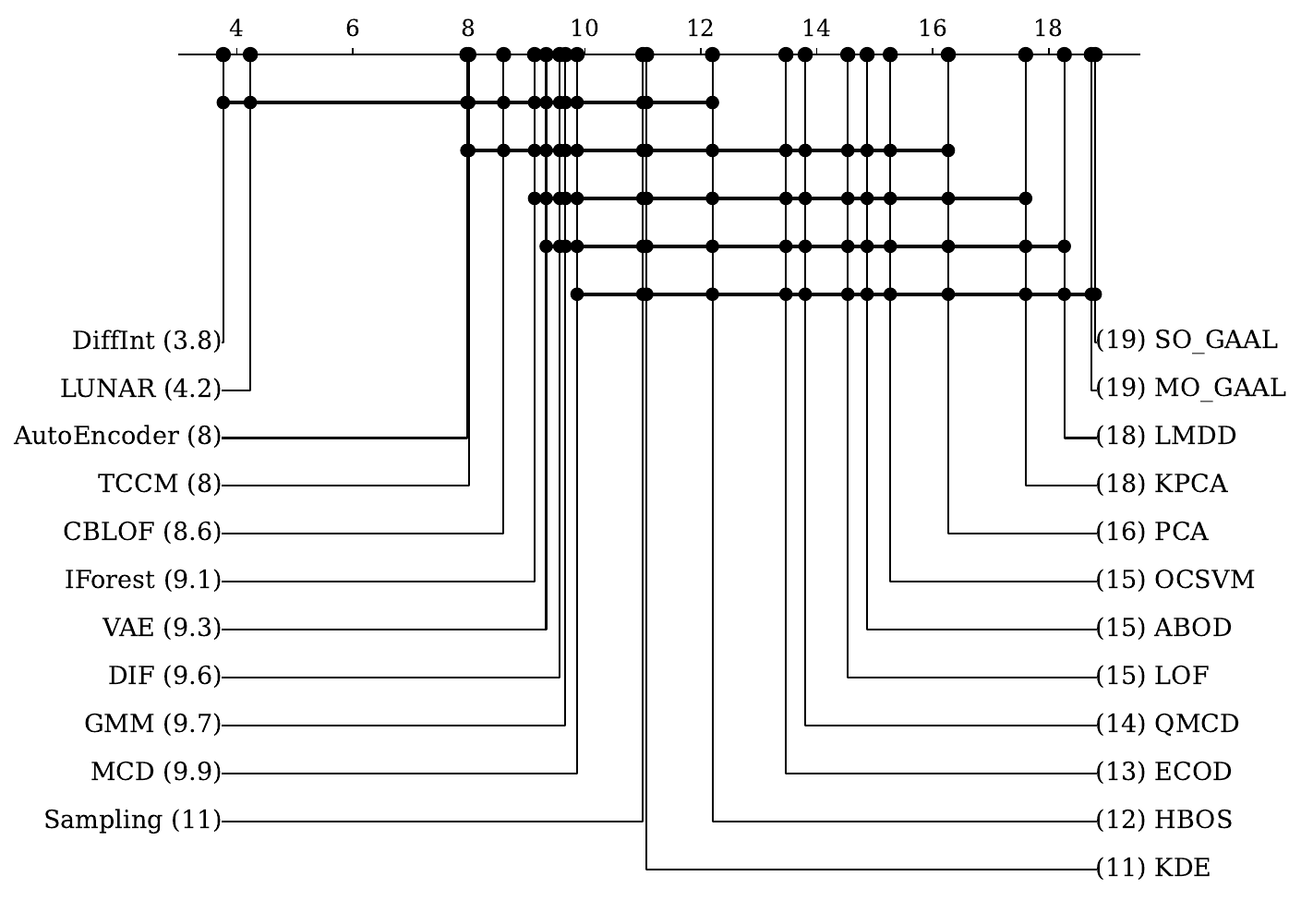}\caption{Medium ($n{=}15$).}
\end{subfigure}\\[3pt]
\begin{subfigure}{0.49\linewidth}
\includegraphics[width=\linewidth]{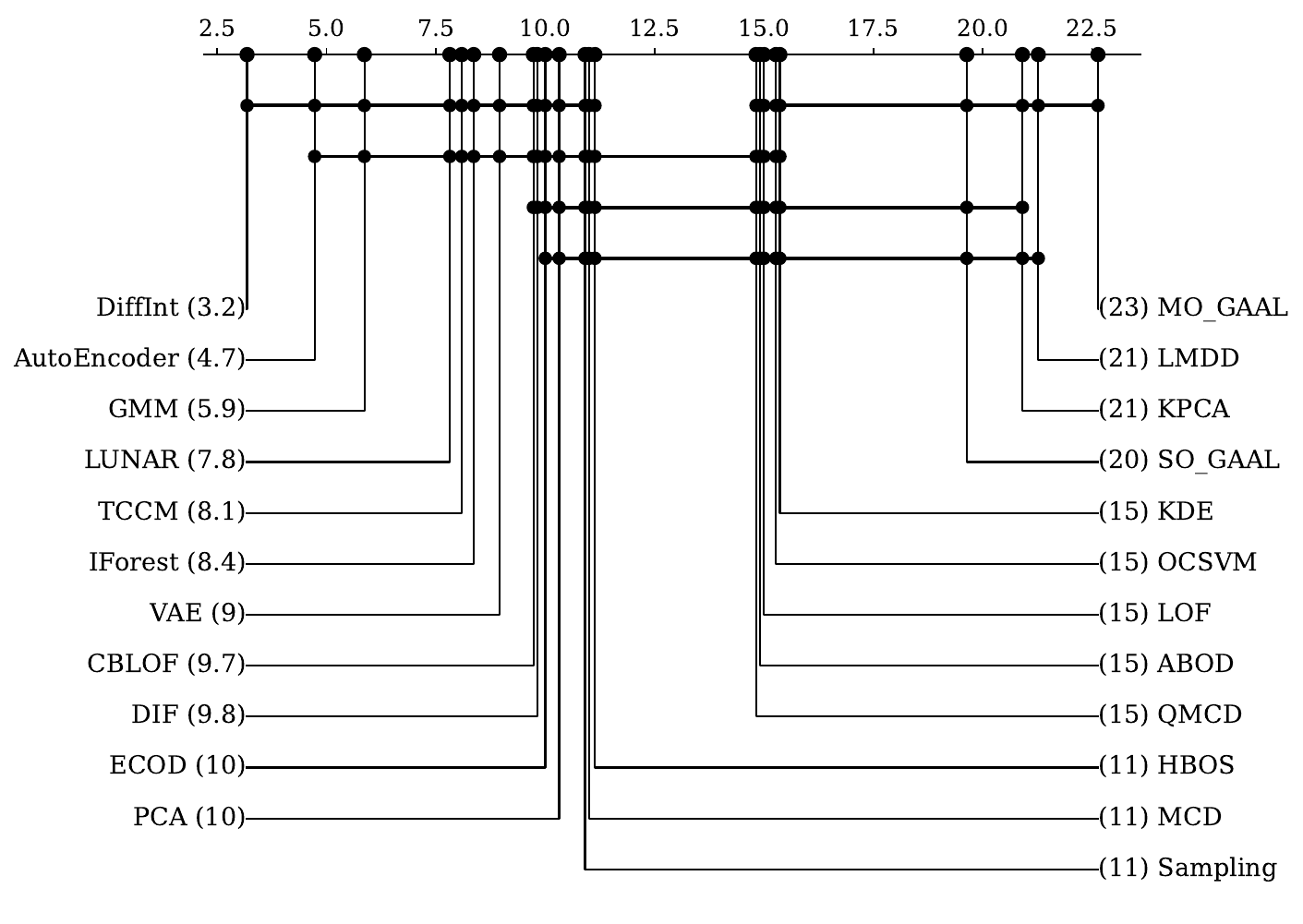}\caption{Large ($n{=}11$).}
\end{subfigure}\hfill
\begin{subfigure}{0.49\linewidth}
\includegraphics[width=\linewidth]{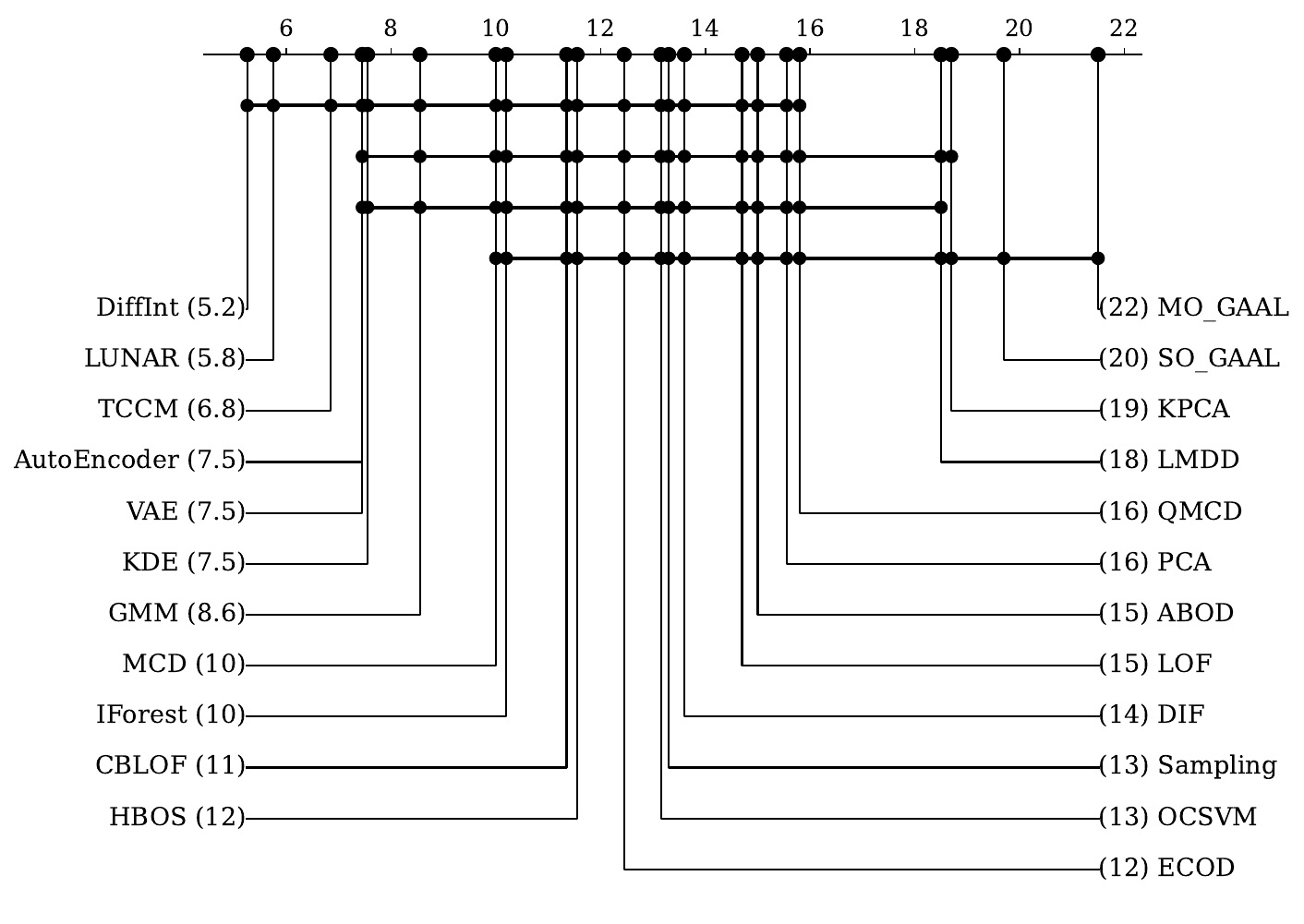}\caption{High-dimensional ($n{=}10$).}
\end{subfigure}
\caption{Per-scale \textbf{ROC--AUC} critical-difference diagrams (missing runs
floored to worst rank, bars indicate non-significant differences at $\alpha{=}0.05$).
\method{} leads strictly on every scale; matches the ``\method{} AUC''
column of Table~\ref{tab:ranks}.}
\label{fig:cdscale-auc}
\end{figure*}

\begin{figure*}[t]
\centering
\begin{subfigure}{0.49\linewidth}
\includegraphics[width=\linewidth]{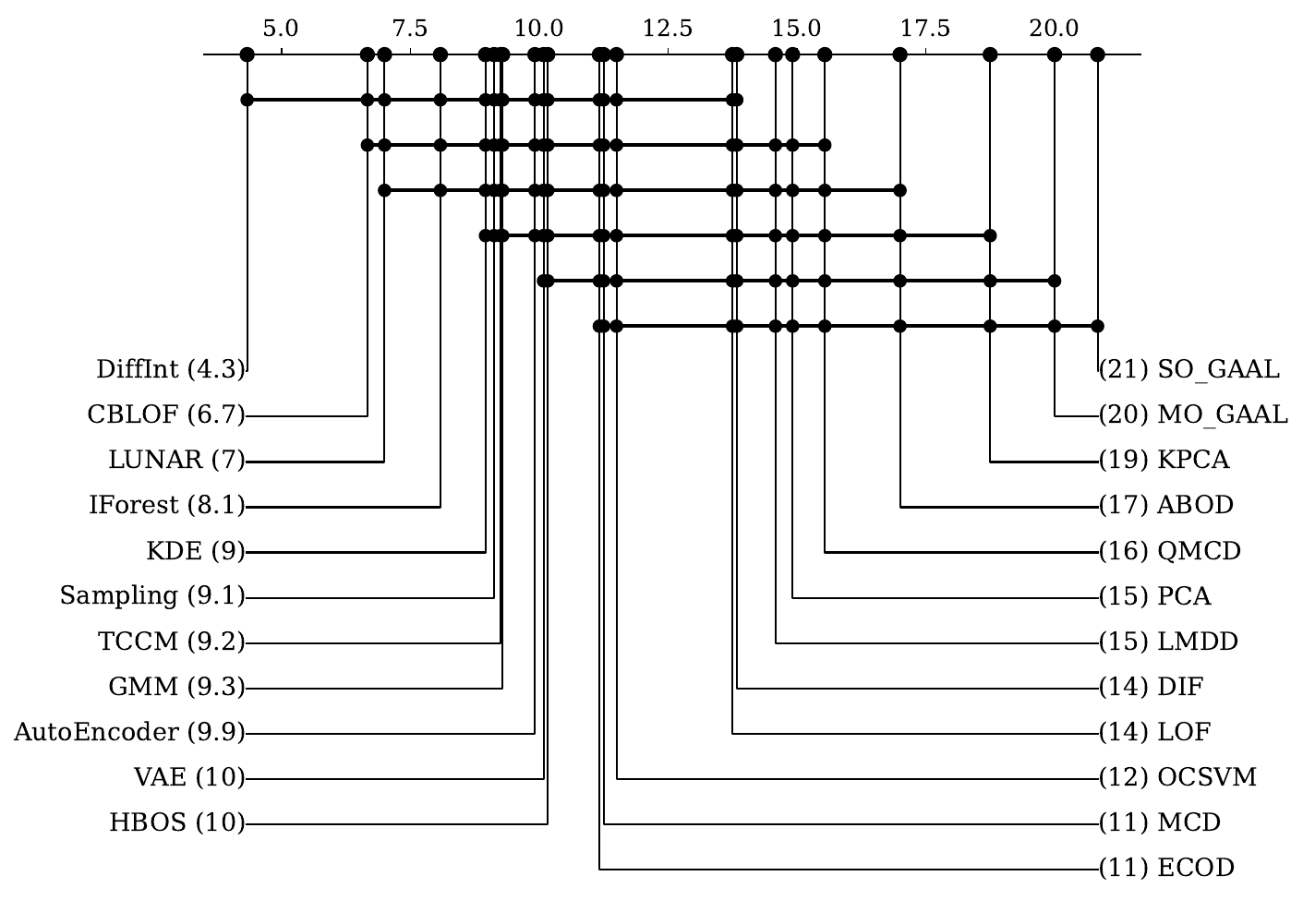}\caption{Small ($n{=}12$).}
\end{subfigure}\hfill
\begin{subfigure}{0.49\linewidth}
\includegraphics[width=\linewidth]{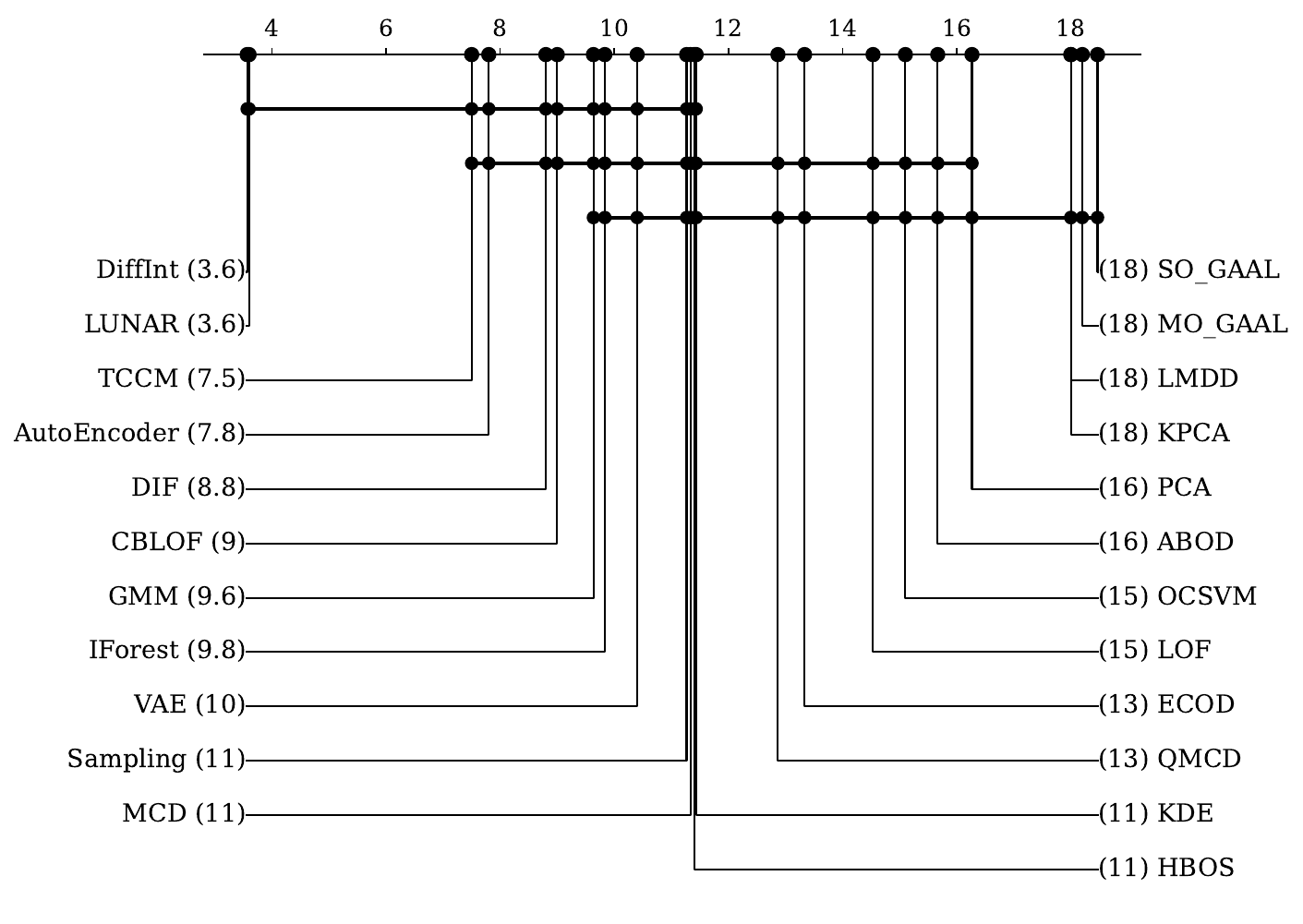}\caption{Medium ($n{=}15$).}
\end{subfigure}\\[3pt]
\begin{subfigure}{0.49\linewidth}
\includegraphics[width=\linewidth]{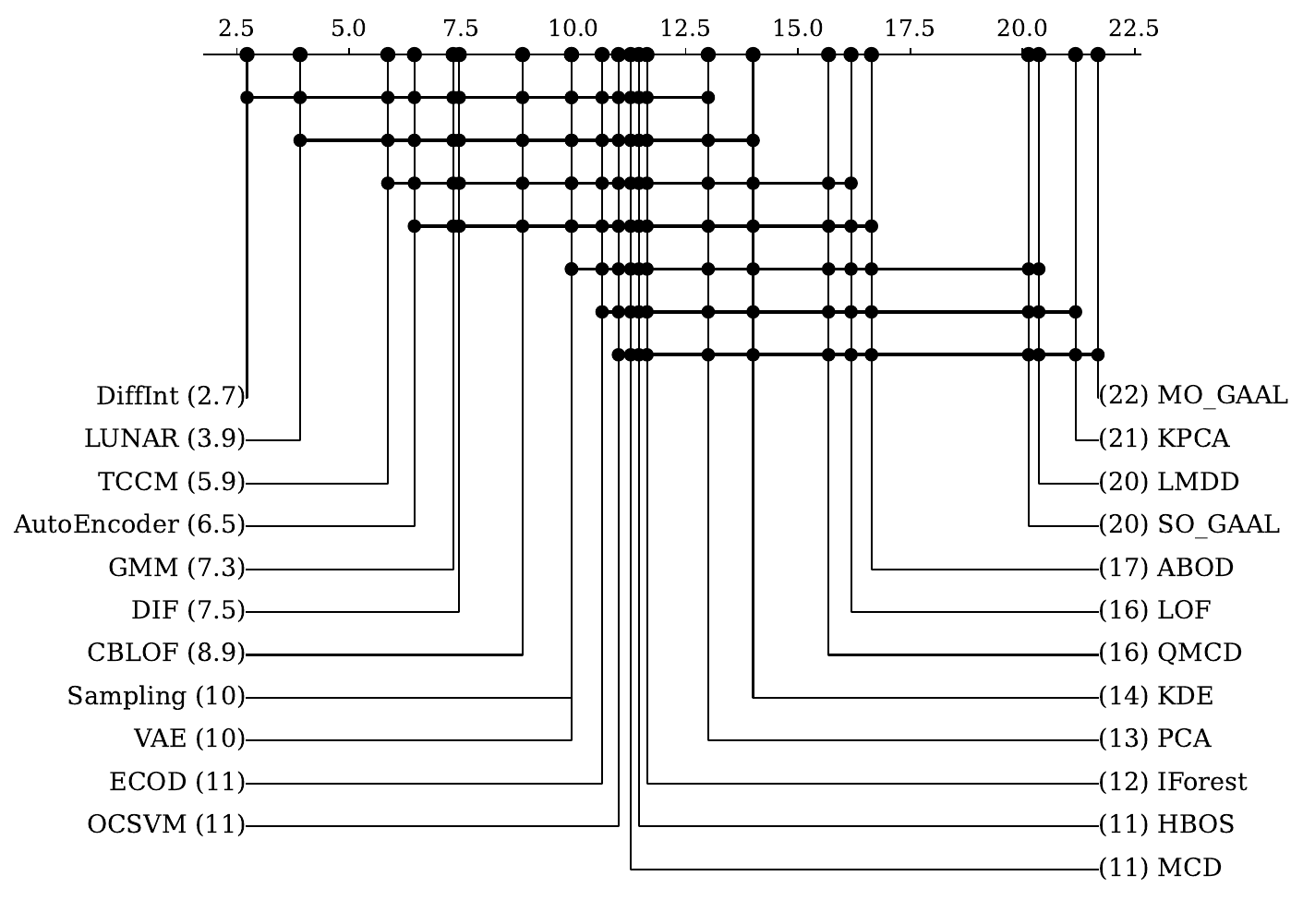}\caption{Large ($n{=}11$).}
\end{subfigure}\hfill
\begin{subfigure}{0.49\linewidth}
\includegraphics[width=\linewidth]{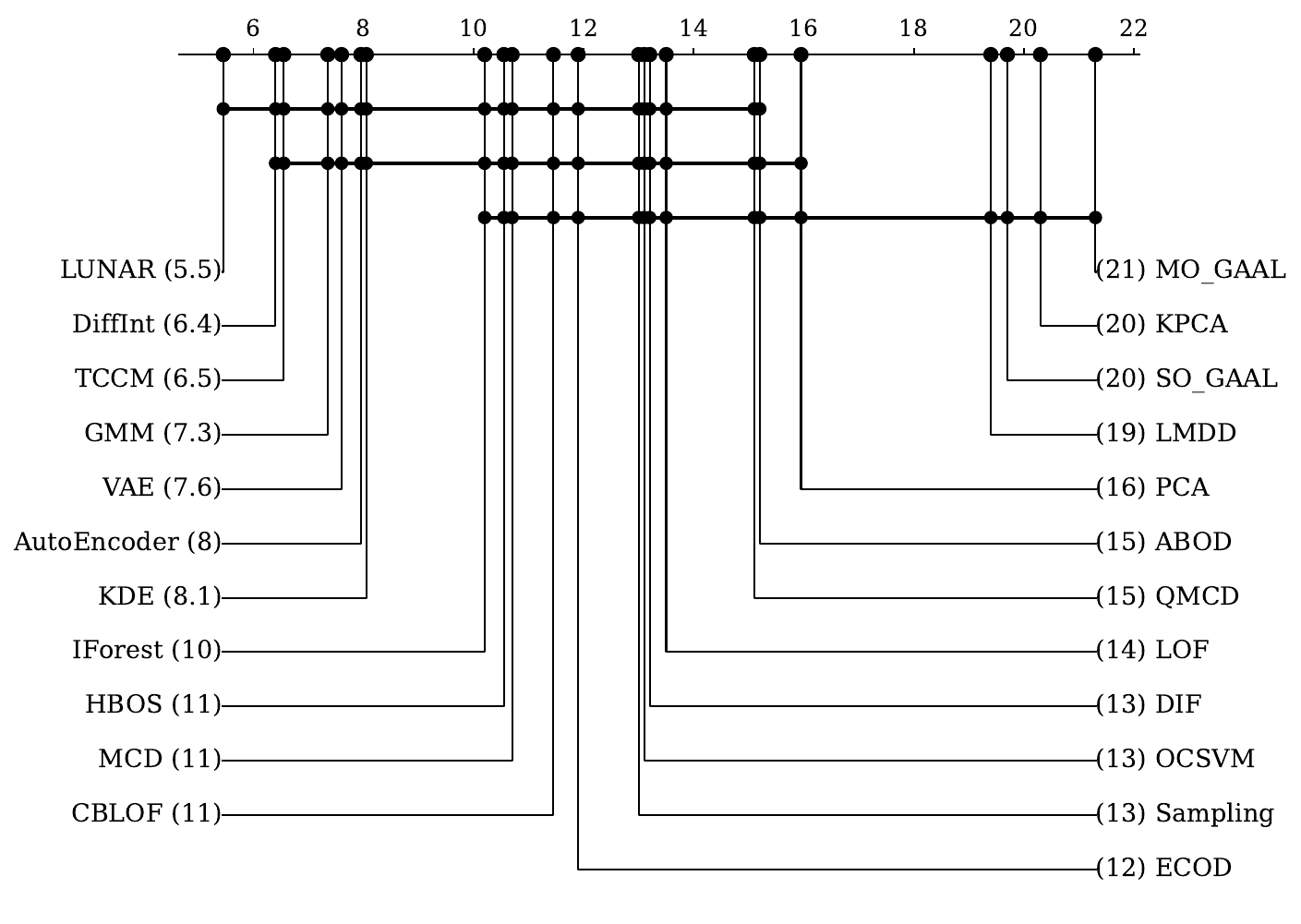}\caption{High-dimensional ($n{=}10$).}
\end{subfigure}
\caption{Per-scale \textbf{AUPR} critical-difference diagrams (same protocol
as Fig.~\ref{fig:cdscale-auc}). \method{} leads strictly on small, medium, and
large data; on high-dimensional data LUNAR's mean rank (5.45) is slightly
ahead of \method{} (6.40), but the gap is well inside the per-scale Nemenyi
CD ($\approx\!11$), i.e.\ a statistical tie. Matches the ``\method{} AUPR''
column of Table~\ref{tab:ranks}.}
\label{fig:cdscale-aupr}
\end{figure*}

\begingroup\scriptsize\setlength{\tabcolsep}{3pt}
\begin{table*}[t]\centering
\caption{ROC--AUC, small datasets (1/2).}\label{tab:auc-small-1}

\end{table*}
\begin{table*}[t]\centering
\caption{ROC--AUC, small datasets (2/2).}\label{tab:auc-small-2}
%
\end{table*}
\begin{table*}[t]\centering
\caption{ROC--AUC, medium datasets (1/3).}\label{tab:auc-med-1}
%
\end{table*}
\begin{table*}[t]\centering
\caption{ROC--AUC, medium datasets (2/3).}\label{tab:auc-med-2}
%
\end{table*}
\begin{table*}[t]\centering
\caption{ROC--AUC, medium datasets (3/3).}\label{tab:auc-med-3}
%
\end{table*}
\begin{table*}[t]\centering
\caption{ROC--AUC, large datasets (1/2).}\label{tab:auc-large-1}
%
\end{table*}
\begin{table*}[t]\centering
\caption{ROC--AUC, large datasets (2/2).}\label{tab:auc-large-2}
%
\end{table*}
\begin{table*}[t]\centering
\caption{ROC--AUC, high-dimensional datasets (1/2).}\label{tab:auc-high-1}
%
\end{table*}
\begin{table*}[t]\centering
\caption{ROC--AUC, high-dimensional datasets (2/2).}\label{tab:auc-high-2}
%
\end{table*}
\endgroup

\subsection{Average precision (AUPR)}
\label{sec:aupr}
Tables~\ref{tab:aupr-small-1}--\ref{tab:aupr-high-2} give the per-dataset AUPR
under the same protocol and highlighting. AUPR weights the rare anomaly class
more heavily than ROC--AUC; the \method{} ranking is consistent across the two
metrics, indicating the gains are not an artifact of the chosen measure.

\begingroup\scriptsize\setlength{\tabcolsep}{3pt}
\begin{table*}[t]\centering
\caption{AUPR, small datasets (1/2).}\label{tab:aupr-small-1}

\end{table*}
\begin{table*}[t]\centering
\caption{AUPR, small datasets (2/2).}\label{tab:aupr-small-2}
%
\end{table*}
\begin{table*}[t]\centering
\caption{AUPR, medium datasets (1/3).}\label{tab:aupr-med-1}
%
\end{table*}
\begin{table*}[t]\centering
\caption{AUPR, medium datasets (2/3).}\label{tab:aupr-med-2}
%
\end{table*}
\begin{table*}[t]\centering
\caption{AUPR, medium datasets (3/3).}\label{tab:aupr-med-3}
%
\end{table*}
\begin{table*}[t]\centering
\caption{AUPR, large datasets (1/2).}\label{tab:aupr-large-1}
%
\end{table*}
\begin{table*}[t]\centering
\caption{AUPR, large datasets (2/2).}\label{tab:aupr-large-2}
%
\end{table*}
\begin{table*}[t]\centering
\caption{AUPR, high-dimensional datasets (1/2).}\label{tab:aupr-high-1}
%
\end{table*}
\begin{table*}[t]\centering
\caption{AUPR, high-dimensional datasets (2/2).}\label{tab:aupr-high-2}
%
\end{table*}
\endgroup
\clearpage

\end{document}